\tikzstyle{disent_latent} = [circle,pattern=north east lines, pattern color=black!20,draw=black,inner sep=1pt,
\tikzstyle{obs_det} = [latent, diamond, fill=gray!25]
\DeclareMathOperator{\expect}{\mathbb{E}}
\DeclareMathOperator{\ELBO}{\mathcal{L}}
\DeclareMathOperator{\KL}{\mathrm{KL}}
\DeclareMathOperator{\GG}{\mathrm{GG}}
\newcommand{\norm}[1]{\left | #1\right |}
\newcommand{\vecto}[1]{\boldsymbol{\mathbf{#1}}}
\renewcommand{\v}{\vecto}
\newtheorem{theorem}{Theorem}
\newtheorem{proposition}{Proposition}
\newif\ifnobrackets
\renewcommand\@cite[2]{\ifnobrackets\else[\fi{#1\if@tempswa , #2\fi}\ifnobrackets\else]\fi\nobracketsfalse}
\newcommand\nbcite{\nobracketstrue\cite}
\DeclarePairedDelimiterX\MeijerM[3]{\lparen}{\rparen}%
{\begin{smallmatrix}#1 \\ #2\end{smallmatrix}\delimsize\vert\,#3}
\begin{document}
\runningauthor{Alexander Camuto\textbf{*}, Matthew Willetts\textbf{*}, Brooks Paige, Chris Holmes Stephen Roberts}
\setlength\abovedisplayskip{6pt plus 1pt minus 6pt}
\setlength\belowdisplayskip{6pt plus 1pt minus 6pt}

\twocolumn[

\aistatstitle{Learning Bijective Feature Maps for Linear ICA}

\aistatsauthor{Alexander Camuto\textbf{*}$^{,1,3}$ \hspace{1.8cm} Matthew Willetts\textbf{*}$^{,1,3}$}
\aistatsauthor{Brooks Paige$^{2,3}$ \hspace{1.2cm}
Chris Holmes$^{1,3}$ \hspace{1.2cm} Stephen Roberts$^{1,3}$}
\aistatsaddress{${}^{1}$University of Oxford \hspace{0.4cm} ${}^{2}$University College London \hspace{0.4cm} ${}^{3}$Alan Turing Institute}]

\begin{abstract}
Separating high-dimensional data like images into independent latent factors, i.e independent component analysis (ICA), remains an open research problem.
As we show, existing probabilistic deep generative models (DGMs), which are tailor-made for image data, underperform on non-linear ICA tasks.
To address this, we propose a DGM which combines bijective feature maps with a linear ICA model to learn interpretable latent structures for high-dimensional data.
Given the complexities of jointly training such a hybrid model, we introduce novel theory that constrains linear ICA to lie close to the manifold of orthogonal rectangular matrices, the Stiefel manifold.
By doing so we create models that converge quickly, are easy to train, and achieve better unsupervised latent factor discovery than flow-based models, linear ICA, and Variational Autoencoders on images.

\end{abstract}

\section{Introduction}
\label{intro}

In linear Independent Component Analysis (ICA), data is modelled as having been created from a linear mixing of independent latent \textit{sources}~\citep{Cardoso1989, Cardoso1989a, Cardoso1997, Comon1994}.
The canonical problem is blind source separation;
the aim is to estimate the original sources of a mixed set of signals by learning an \textit{unmixing} matrix, which when multiplied with data recovers the values of these sources.
While linear ICA is a powerful approach to unmix signals like sound \citep{Everson2001}, it has not been as effectively developed for learning compact representations of high-dimensional data like images, where assuming linearity is limiting.
Non-linear ICA methods, which assume non-linear mixing of latents, offer better performance on such data.

In particular, flow-based models have been proposed as a non-linear 
approach to \textit{square} ICA, where we assume the dimensionality of our latent source space is the same as that of our data \citep{Deco1995, Dinh2015}.
Flows parameterise a bijective mapping between data and a feature space of the same dimension and can be trained via maximum likelihood for a chosen base distribution in that space. 
While these are powerful generative models, for image data one typically wants fewer latent variables than the number of pixels in an image.
In such situations, we wish to learn a non-square (dimensionality-reducing) ICA representation.

\begin{figure*}[t!]
   \centering
    
    \subfloat[Sample Images]{

\begin{tabular}[Samples]{c}
\includegraphics[width=0.06\textwidth]{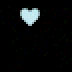}
\includegraphics[width=0.06\textwidth]{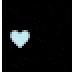}
\\
\end{tabular}}
\hspace{0.0cm}
\subfloat[VAE Sources]{
\begin{tabular}[Samples]{c}
\includegraphics[width=0.3\textwidth]{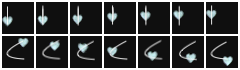}
\end{tabular}
\label{fig:vae_curves}
}
\hspace{0.0cm}
\subfloat[Bijecta Sources]{
\begin{tabular}[Samples]{c}
\includegraphics[width=0.3\textwidth]{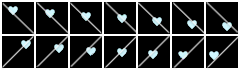}
\end{tabular}}

\hspace{0mm}
    \caption{
    Here we take a dSprites heart and, using a randomly sampled affine transformation, move it around a black background (a).
    The underlying sources of the dataset are \textit{affine} transformations of the heart. 
    In (b-c) images in the center correspond to the origin of the learnt source space.
    Images on either side correspond to linearly increasing values along one of the learnt latent sources whilst the other source remains fixed.
    Bijecta (c) has learned affine transformations as sources (white diagonals), whereas a VAE (with ICA-appropriate prior) (b) has learned non linear transforms (white curves). The VAE has not discovered the underlying latent sources. 
    }
    \label{fig:dpsrites-demo}
\end{figure*}

In this work, we highlight the fact that existing probabilistic deep generative models (DGMs), in particular Variational Autoencoders (VAEs), underperform on non-linear ICA tasks.
As such there is a real need for a probabilistic DGM that can perform these tasks. 
To address this we propose a novel methodology for performing non-square non-linear ICA using a model, termed \textit{Bijecta}, with two jointly trained parts: a highly-constrained non-square linear ICA model, operating on a feature space output by a bijective flow. 
The bijective flow is tasked with learning a representation for which linear ICA is a good model.
It is as if we are \textit{learning the data} for our ICA model.

We find that such a model fails to converge when trained naively with no constraints. 
To ensure convergence, we introduce novel theory for the parameterisation of decorrelating, non-square ICA matrices that lie close to the Stiefel manifold \citep{Stiefel1935}, the space of orthonormal rectangular matrices.
We use this result to introduce a novel non-square linear ICA model that uses Johnson-Lidenstrauss projections (a family of randomly generated matrices).
Using these projections, Bijecta successfully induces dimensionality reduction in flow-based models and scales non-square non-linear ICA methods to high-dimensional image data.
Further we show that it is better able to learn independent latent factors than each of its constituent components in isolation and than VAEs. 
For a preliminary demonstration of the inability of VAEs and the ability of Bijecta to discover ICA sources see Fig \ref{fig:dpsrites-demo}.

\section{Background}

\subsection{Independent Component Analysis}
\label{model}

The goal of ICA is to learn a set of statistically independent sources that `explain' our data.
ICA is a highly diverse modelling paradigm with numerous variants: learning a mapping vs learning a model, linear vs non-linear, different loss functions, different generative models, and a wide array of methods of inference \citep{Cardoso1989, Mackay1996, Lee2000}.

Here, we specify a generative model and find point-wise maximum likelihood estimates of model parameters in the manner of \citep{Mackay1996, Cardoso1997}. 
Concretely, we have a model with latent sources $\v{s} \in \mathcal{S} = \mathbb{R}^{d_s}$ generating data $\v{x} \in \mathcal{X} = \mathbb{R}^{d_x}$, with
$d_s \leq d_x$.
The linear ICA generative model factorises as
\begingroup
\setlength\abovedisplayskip{0pt}
\setlength\belowdisplayskip{0pt}
\[p(\v{x},\v{s})=p(\v{x}|\v{s})p(\v{s}), \qquad p(\v{s}) = \prod_{i=1}^{d_s} p(s_i),\]
\endgroup
where $p(\v{s})$ is a set of independent distributions appropriate for ICA. 
In linear ICA, where all mappings are simple matrix multiplications, 
the sources \textit{cannot} be Gaussian distributions.
Recall that we are mixing our sources to generate our data:
A linear mixing of Gaussian random variables is itself Gaussian, so unmixing is impossible
\citep{Lawrence2000}.
To be able to unmix, to break this symmetry, we can choose any heavy-tailed or light-tailed non-Gaussian distribution as our prior $p(\v{s})$
that gives us axis alignment and independence between sources.
A common choice is the family of generalised Gaussian distributions,
\begin{align}
p(s_i) &= \mathrm{GG}(s_i|\mu,\alpha,\rho) 
\nonumber \\
p(s_i) &= \frac{\rho}{2\alpha \Gamma(1/\rho)}\exp\left[{\left(-\frac{\lvert s_i - \mu\rvert}{\alpha}\right)^\rho} \right]
\label{eq:gnd}
\end{align}
with mean $\mu$, scale $\alpha$ and shape $\rho$.
For $\rho=2$ we recover the Normal distribution, and for $\rho=1$ we have the (heavy-tailed) Laplace. 
As $\rho \to \infty$ the distribution becomes increasingly sub-Gaussian, tending to a uniform distribution. 
As such, the generalised Gaussian is a flexible framework for specifying ICA-appropriate priors as it allows for the specification of a sub or super Gaussian distribution by way of a single parameter: $\rho$.

\subsection{Manifolds for the unmixing matrix $\v{A}^{+}$}
\label{sec:matrix-manifolds}

In linear ICA we want to find the linear mapping $\v{A}^{+}$ resulting in \textit{maximally independent} sources.
This is more onerous than merely finding decorrelated sources, as found by principal component analysis (PCA).

When learning a linear ICA model we typically have the mixing matrix $\v{A}$ as the (pseudo)inverse of the unmixing matrix $\v{A}^{+}$ and focus on the properties of $\v{A}^{+}$ to improve convergence. 
$\v{A}^{+}$ linearly maps from the data-space $\mathcal{X}$ to the source space $\mathcal{S}$.
It can be decomposed into two linear operations.
First we \textit{whiten} the data such that each component has unit variance and these components are mutually uncorrelated.
We then apply an orthogonal transformation and a scaling operation \nbcite[\S6.34]{Hyvarinen2001} to `rotate' the whitened data into a set of coordinates where the sources are independent \textit{and} decorrelated.
Whitening on its own is not sufficient for ICA --- 
two sources can be uncorrelated \textit{and} dependent 
(see Appendix \ref{app:correlationVSIndependence}).

Thus we can write the linear ICA unixing matrix as
\begin{equation}
\v{A}^+ = \v{\Phi R}\v{W}
\label{eq:eq:classic_ica_unmix}
\end{equation}
where $\v{W}\in\mathbb{R}^{d_s\times d_x}$ is our whitening matrix, $\v{R}\in\mathbb{R}^{d_s\times d_s}$ is an orthogonal matrix and $\v{\Phi}\in\mathbb{R}^{d_s}$ is a diagonal matrix.
Matrices that factorise this way are known as the \textit{decorrelating matrices} \citep{Everson1999}: members of this family decorrelate through $\v{W}$,
and $\v{\Phi R}$ ensures that  sources are statistically independent, not merely uncorrelated.
The optimal ICA unmixing matrix is the decorrelating matrix that decorrelates \textit{and} gives independence.

\subsection{Flows}
\label{flows}
Flows are models that stack numerous invertible changes of variables.
One specifies a simple base distribution and learns a sequence of (invertible) transforms to construct new distributions that assign high probability to observed data.
Given a variable $\v{z}\in\mathcal{Z}=\mathbb{R}^{d_x}$, we specify the distribution over data $\v{x}$ as
\begin{equation}
p(\v{x}) = p(\v{z})\norm{\mathrm{det}\frac{\partial f^{-1}}{\partial \v{z}}},
\label{eq:changeofvars}
\end{equation}
where $f$ is a bijection from $\mathcal{Z} \to \mathcal{X}$, ie $\mathbb{R}^{d_x} \to \mathbb{R}^{d_x}$, and $p(\v{z})$ is the base distribution over the latent $\v{z}$ \citep{JimenezRezende, deepflows}. 

For more flexible distributions for $\v{x}$, we
specify $\v{x}$ through a series of composed functions, from our simple initial $p$ into a more complex multi-modal distribution; for example for a series of $K+1$ mappings, \(\v{z} = f_K \circ ... \circ f_0(\v{x})\).
By the properties of determinants under function composition 
\begin{equation}
p(\v{x}) = p(\v{z}_K)\prod_{i=0}^{K}\norm{\mathrm{det}\frac{\partial f_{i}^{-1}}{\partial \v{z}_{i+1}}},
\label{eq:deepflow}
\end{equation}
where $\v{z_i}$ denotes the variable resulting from the transformation $f_{i}(\v{z}_{i})$, $p(\v{z}_K)$ defines a density on the $K^{\mathrm{th}}$, and the bottom most variable is our data ($\v{z}_0 = \v{x}$).

Computing the determinant of the Jacobian ($\mathrm{det}\frac{\partial f^{-1}}{\partial \v{z}}$) in Eq.~\eqref{eq:changeofvars} can be prohibitively costly, especially when composing multiple functions as in Eq.~\eqref{eq:deepflow}. 
To address this, flows use \textit{coupling layers} that enforce a lower triangular Jacobian such that the determinant of the Jacobian is simply the product of its diagonal elements.
We use recently proposed coupling layers based on rational quadratic splines (RQS) to enforce this lower triangular structure~\citep{Durkan2019}. 
They form highly flexible flows that typically require fewer composed mappings to achieve good performance relative to other coupling layers.
See Appendix \ref{app:coupling-layers} for details.

\section{Non-Square ICA using Flows}

Variational Autoencoders seem like a natural fit for learning a compressed set of statistically independent latent variables \citep{Kingma2013, Rezende2014}. 
It seems natural to train a VAE with an appropriate non-Gaussian prior, 
and expect that it would learn an appropriate ICA model.
However, this is not the case.
In \cite{Khemakhem2019} some experiments suggest that VAEs with ICA-appropriate priors are unsuited to performing non-linear ICA.
In our experiments (\S\ref{sec:exps}) we further verify this line of inquiry and show that VAEs struggle to match their aggregate posteriors to non-Gaussian priors and thus are unable to discover independent latent sources.

Though source separation can be achieved by `disentangling' methods such as the $\beta$-VAE \citep{Higgins2017} and $\beta$-TCVAE \citep{Chen2018},
these methods require post-hoc penalisation of certain terms of the VAE objective, at times inducing improper priors (in the $\beta$-TCVAE in particular \citep{Mathieu2019}).
Further, precise tuning of this penalisation, a form of soft supervision, is key to getting appropriate representations \citep{Rolinek2019, Locatello2019}.
\cite{Stuhmer2019} obtains a variety of non-linear ICA using VAEs with sets of Generalised Gaussian priors, but even then $\beta$ penalisation is required to obtain 'disentangled' representations.

As such there is a need for probabilistic DGMs that can separate sources without added hyperparameter tuning and that can do so by matching ICA-appropriate priors.
Our solution combines linear ICA with a dimensionality-preserving invertible flow $f_\theta$.
The flow acts between our data space of dimensionality and the representation fed to the linear ICA generative model; learning a representation that is well fit by the simple, linear ICA model. 
As we demonstrate in experiments (\S\ref{sec:exps}), this hybrid model, which we call Bijecta, succeeds where VAEs fail: it can match non-Gaussian priors and is able to discover independent latent sources on image datasets.

\subsection{A Linear ICA base distribution for flows}

Our aim here is to develop a non-square  ICA method that is both end-to-end differentiable \textit{and} computationally efficient, such that it can be trained jointly with a flow via stochastic gradient descent. 
We begin by choosing our base ICA source distribution to be a set of independent generalised Gaussian distributions, Eq~\eqref{eq:gnd} with $\mu=0$, $\alpha=1$ and $\rho$ varying per experiment; and the ICA model's likelihood to a be a Gaussian. 
\begin{align}
    &p(s_i) = \mathrm{GG}(s_i |\mu=0,\alpha=1,\rho), {\mathrm{for\ } i \in \{1,\dots,d_s}\}, \nonumber \\
    &p(\v{z}|\v{s}) = \mathcal{N}(\v{x}|\v{A}\v{s}, \v{\Sigma}_\theta), \nonumber
\end{align}
where $\v{A} \in \mathbb{R}^{d_x\times d_s}$ is our (unknown) ICA mixing matrix, which acts on the sources to produce a linear mixture; and 
$\v{\Sigma}_\theta$ is a learnt or fixed diagonal covariance.
This linear mixing of sources yields
an intermediate representation $\v{z}$ that is then mapped to the data by a flow. 
Our model has three sets of variables: the observed data $\v{x}$, the flow representation $\v{z} = f^{-1}(\v{x})$, and ICA latent sources $\v{s}$.
It can be factorised as
\begin{align}
    p_\theta(\v{x},\v{s})&=p_\theta(\v{x}|\v{s})p(\v{s})=p(\v{z}|\v{s})p(\v{s})\norm{\mathrm{det}\frac{\partial f_\theta^{-1}}{\partial \v{z}}}
\end{align}

While it is simple to train a flow by maximum likelihood method when we have a simple base distribution in $\mathcal{Z}$, here to obtain a maximum likelihood objective we would have to marginalise out $\v{s}$ to obtain the evidence in $\mathcal{Z}$; a computationally intractable procedure:
\begin{equation}
p(\v{z};\v{A}, \v{\Sigma}_\theta) = \int \mathrm{d}\v{s}\, p(\v{z}|\v{s};\v{A}, , \v{\Sigma}_\theta)p(\v{s}).
\label{eq:icaint}
\end{equation}
A contemporary approach is to use amortised variational inference for the linear ICA part of our model.
This means we introduce an approximate amortised posterior for $\v{s}$ and perform importance sampling on Eq (\ref{eq:icaint}), taking gradients through our samples using the reparameterisation trick \citep{Kingma2013, Rezende2014}.
Amortised stochastic variational inference offers numerous benefits:
it scales training to large datasets by using stochastic gradient descent, %
our trained model can be applied to new data with a simple forward pass,
and we are free to choose the functional \& probabilistic form of our approximate posterior.
Further our ICA model is end-to-end differentiable, making it optimal for jointly training with a flow. 

We choose a linear mapping in our posterior, with \(q_\phi(\v{s}|\v{z}) = \mathrm{Laplace}(\v{s}|\v{A}^{+}\v{z}, \v{b}_\phi)\),
where we have introduced variational parameters $\phi=\{\v{A}^{+}, \v{b}_\phi\}$ corresponding to an unmixing matrix and a diagonal diversity.
Using samples from this posterior we can define a lower bound $\ELBO$ on the evidence in $\mathcal{Z}$
\begin{align}
\log p(\v{z};\v{A},&\v{\Sigma}_\theta)  \geq \ELBO(\v{z};\phi,\v{A},\v{\Sigma}_\theta) \nonumber \\ &= \expect_{\v{s}\sim q}[\log p(\v{z}|\v{s}) -\KL(q_\phi(\v{s}|\v{z})|| p(\v{s}))
\label{eq:elbo_z_ica}
\end{align}
Using the change of variables equation, Eq (\ref{eq:changeofvars}), and the lower bound on the evidence for ICA in \eqref{eq:elbo_z_ica} for $\mathcal{Z}$, we can obtain a variational lower bound on the evidence for our data $\v{x}$ as the sum of the ICA model's ELBO (acting on $\v{z}$) and the log determinant of the flow:
\begin{align}
\log p_\theta(\v{x};\v{A}, &\v{\Sigma}_\theta)  \geq  \ELBO(\v{x};\theta, \phi,\v{A}, \v{\Sigma}_\theta) \nonumber \\ &= \ELBO(\v{z};\phi,\v{A},\v{\Sigma}_\theta)
+ \log \norm{\mathrm{det}\frac{\partial f_\theta^{-1}}{\partial \v{z}}}
\label{eq:total_var_objective}
\end{align}
As such our model is akin to a flow model, but with an additional latent variable $\v{s}$; the base distribution $p(\v{z})$ of the flow is defined through marginalizing out the linear mixing of the sources.
We refer to a model with $n$ non-linear splines mapping from $\mathcal{X}$ to $\mathcal{Z}$ as an $n$-layer Bijecta model.

\begin{figure}[t]
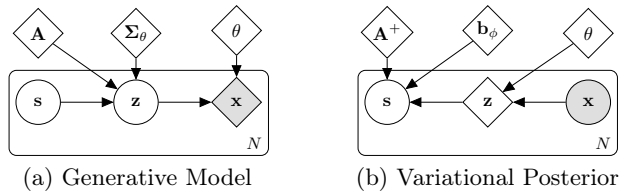

\centering
\noindent\makebox[0.5 \textwidth]{%
\subfloat[Generative Model]{\scalebox{0.7}{\raisebox{0ex}{
\tikz{ %
      \node[obs_det,minimum size=27pt] (x) {$\v{x}$} ; %
    \node[latent, left=of x,minimum size=24pt] (z) {$\v{z}$} ; %
    \node[latent, left=of z,minimum size=24pt] (s) {$\v{s}$} ; %
    \node[det, above=0.4cm of s,minimum size=27pt] (A) {$\v{A}$} ; %
    \node[det, above=0.4cm of z,minimum size=27pt] (b) {$\v{\Sigma}_\theta$} ; %
    \node[det, above=0.4cm of x,minimum size=27pt] (theta) {$\theta$} ; %
    \edge {s} {z} ; %
    \edge {A} {z} ; %
    \edge {z} {x} ; %
    \edge {theta} {x} ; %
    \edge {b} {z} ; %
    \plate {plate1} {(x) (z) (s)} {\scalebox{1}{{$N$}}}; %
  }}
  \label{fig:gout}
  }}
\hspace*{9mm}
\subfloat[Variational Posterior]{\scalebox{0.7}{
\tikz{ %
      \node[obs,minimum size=24pt] (x) {$\v{x}$} ; %
    \node[det, left=of x,minimum size=27pt] (z) {$\v{z}$} ; %
    \node[latent, left=of z,minimum size=24pt] (s) {$\v{s}$} ; %
    \node[det, above=0.4cm of s,minimum size=27pt] (A) {$\v{A}^{+}$} ; %
    \node[det, above=0.4cm of z,minimum size=27pt] (bphi) {$\v{b}_\phi$} ; %
    \node[det, above=0.4cm of x,minimum size=27pt] (theta) {$\theta$} ; %
    \edge {x} {z} ; %
    \edge {z} {s} ; %
    \edge {A} {s} ; %
    \edge {bphi} {s} ; %
    \edge {theta} {z} ; %
    \plate {plate1} {(x) (z) (s)} {\scalebox{1}{{$N$}}}; %
  }}
  \label{fig:gin_var}}
  }
 \caption{The generative model (a) and variational posterior (b), as defined in Eq (\ref{eq:total_var_objective}).}
 \label{fig:modelgraphs}
\end{figure}

In the case of non-square ICA, where our ICA model is not perfectly invertible, errors when reconstructing a mapping from $\mathcal{S}$ to $\mathcal{Z}$ may amplify when mapping back to $\mathcal{X}$. 
To mitigate this we add an additional regularisation term in our loss that penalises the $L_1$ error of each point when reconstructed into $\mathcal{X}$.
This penalisation can be weighted according to the importance of high-fidelity reconstructions for a given application. 

We attempted to train Bijecta with unconstrained %
mixing and unmixing matrices, but found that jointly training a linear model with a powerful flow was not trivial and models failed to converge when naively optimising Eq \eqref{eq:total_var_objective}. 
We found it crucial to appropriately constrain the unmixing matrix to get models to converge.
We detail these constraints in the next section.

\section{Whitening in $\v{A}^{+}$, without SVD}
\label{sec:nonsquareica}

\begin{figure*}
\centering
    \subfloat[Linear ICA]{\includegraphics[width=0.45\textwidth]{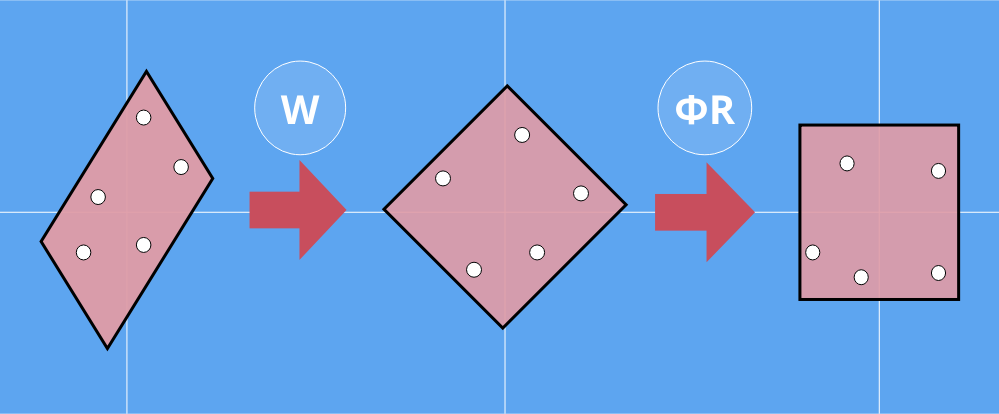}}
    \hspace{1cm}
    \subfloat[Bijecta]{
    \includegraphics[width=0.45\textwidth]{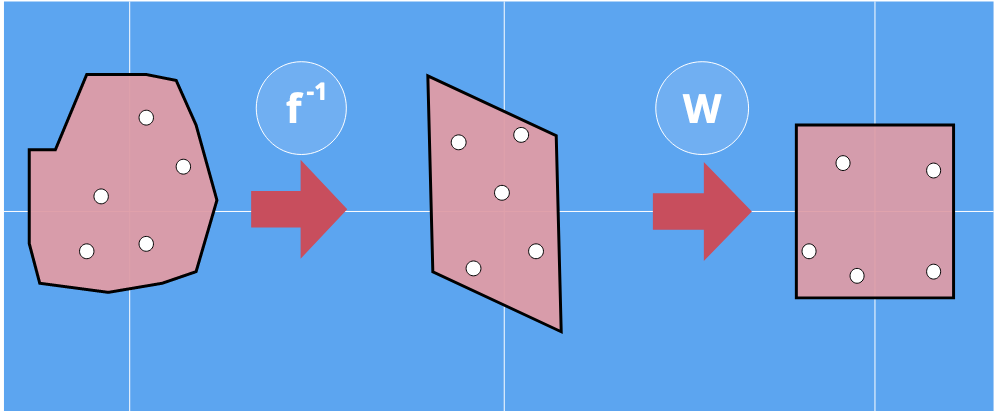}}
    \caption{
    (a) Sequence of actions that are performed by the elements of $\v{A}^+$, the unmixing matrix of linear ICA. $\v{W}$ whitens the correlated data and $\v{\Phi}\v{R}$ then ensures that the whitened (decorrelated) data is also independent.
    (b) Sequence of actions that are performed by the elements of Bijecta. $f^{-1}$ maps data to a representation for which the whitening matrix \textit{is} the ICA matrix. 
    $\v{W}$ now whitens $f^{-1}(\v{x})$ and the result is \textit{also} statistically independent. }
     \label{fig:BIJseq}
\end{figure*}

What are good choices for the mixing and unmixing matrices?
Recall in Sec~\ref{sec:matrix-manifolds} we discussed various traditional approaches to constraining the unmixing matrix.
For our flow-based model, design choices as to the parameterisation of $\v{A}^{+}$ stabilise and accelerate training.
As before, the mixing matrix $\v{A}$ is unconstrained during optimisation.
However, without the constraints on $\v{A}^{+}$ we describe in this section, we found that joint training of a flow with linear ICA did not converge.

Recall Eq~\eqref{eq:eq:classic_ica_unmix} --- linear ICA methods carry out whitening $\v{W}$, performing dimensionality reduction projecting from a $d_x$-dimensional space to a $d_s$-dimensional space, and the remaining rotation and scaling operations are square.
When training with a flow the powerful splines we are learning can fulfill the role of the square matrices $\v{R}$ and $\v{\Phi}$, but doing this ahead of the whitening itself.
Put another way, the outputs from the flow can be learnt such that they are simply a whitening operation away from being effective ICA representations in $\mathcal{S}$.
Thus, to minimise the complexities of jointly training a powerful flow with a small linear model, we can simply set  $\v{A}^+ = \v{W}$, such that the unmixing matrix projects from $d_z$ to $d_s$ and is decorrelating.
Statistical independence will come from the presence of the $\KL$ term in Eq~\eqref{eq:elbo_z_ica}: the flow will learn to give $\v{z}$ representations that, when whitened, are good ICA representations in $\mathcal{S}$.
See Fig \ref{fig:BIJseq} for a visual illustration of this process and a comparison with the steps involved in linear ICA. 

In previous linear ICA methods, the whitening procedure $\v{W}$ has been derived in some data-aware way. 
A common choice is to whiten via the Singular Value Decomposition (SVD) of the data matrix, where $\v{W} = \v{\Sigma}\v{U}^T$, $\v{\Sigma}$ is the rectangular diagonal matrix of singular values of $\v{X}$, and the columns of $\v{U}$ are the left-singular vectors.
Computing the SVD of the whole dataset is expensive for large datasets; for us, in the context of Bijecta, we would be re-calculating the SVD of the representations $\v{Z}=f^{-1}(\v{X})$ of the entire dataset after every training step. 
One route around this would be online calculation of the whitening matrix \citep{Cardoso1996, Hyvarinen2001}.
This introduces an extra optimisation process that also has to be tuned, and would interact with the training of the flow.

To tackle these shortcomings of existing whitening methods, we propose a new method for linear non-square ICA that uses Johnson–Lindenstrauss (JL) transforms (also known as sketching) \citep{Woodruff2014}, which not only works effectively as a linear ICA method, but also works in conjunction with a flow model.  
These JL transforms have favourable properties for ICA, as we demonstrate in theoretical results. 
Further, this method samples part of the whitening matrix at initialisation and leaves it fixed for the remainder of training, requiring \textit{no hyper-parameter tuning} and making it extremely computationally efficient. 
This method is novel and efficient when used as a whitening method within linear ICA, and when combined with a flow as in Bijecta is a powerful method for non-linear ICA as we demonstrate in experiments.

\subsection{Approximately-Stiefel matrices}

We have set  $\v{A}^+ = \v{W}$, the whitening matrix. 
$\v{W}$ has two aims in non-square ICA. The first is dimensionality reduction, projecting from a $d_x$-dimensional space to a $d_s$-dimensional space. 
The second is to decorrelate the data it transforms, meaning that the resulting projection will have unit variance and mutually uncorrelated components.  
More formally we wish for $\v{W}$ of dimensionality $d_s \times d_x$ to be decorrelating. 

The set of orthogonal decorrelating rectangular matrices lie on the Stiefel Manifold \citep{Stiefel1935} denoted $\mathcal{V}$.
For matrices with $r$ rows and $c$ columns, a matrix $\v{G} \in \mathcal{V}(r,c)$ iff $\v{G}\v{G}^* = \v{I}$ ($\v{G}^*$ the conjugate transpose of $\v{G}$).
Constraining the optimisation of $\v{W}$ to this manifold can be computationally expensive and complex \citep{Bakir2004, Harandi2016, Siegel2019}
and instead we choose for $\v{W}$ to be \textit{approximately} Stiefel, that is to lie close to $\mathcal{V}(d_s,d_x)$.
This is justified by the following theorem, proved in Appendix A:
\begin{theorem}
Let $\v{G}$ be a rectangular matrix and  $\tilde{\v{G}}$ be its projection onto $\mathcal{V}(r,c)$. 
As the Frobenius norm $||\v{G} - \tilde{\v{G}}|| \to 0$ we have that $||\v{G}\v{X}\v{X}^T\v{G}^T - \v{\Psi}|| \to 0$, where $\v{G}\v{X}\v{X}^T\v{G}^T$ is the cross-correlation of the projection of data $\v{X}$ by $\v{G}$, and $\v{\Psi}$ is some diagonal matrix. 
\label{th:1}
\end{theorem}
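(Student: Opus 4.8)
This is a perturbation statement: closeness of $\v{G}$ to the Stiefel manifold, measured in Frobenius norm, should yield approximate decorrelation of the data it projects. I would prove it in three steps: (1) use the SVD of $\v{G}$ to make the hypothesis concrete and to pin down $\tilde{\v{G}}$; (2) expand the cross-correlation $\v{G}\v{X}\v{X}^T\v{G}^T$ about its value at $\tilde{\v{G}}$ and bound the remainder; (3) identify the limiting matrix $\v{\Psi}$ and check it is diagonal.

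\textbf{Step 1: SVD and the Stiefel projection.} Write $\v{G}=\v{U}\v{\Sigma}\v{V}^T$ with $\v{U},\v{V}$ orthogonal and $\v{\Sigma}$ the rectangular matrix of singular values $\sigma_1\ge\dots\ge\sigma_r\ge 0$ (we take $r\le c$, so $\mathcal{V}(r,c)$ is the set of matrices with orthonormal rows). Maximizing $\langle\v{G},\v{Q}\rangle$ over $\v{Q}\v{Q}^T=\v{I}_r$ shows that the Frobenius-closest element of $\mathcal{V}(r,c)$ is the orthogonal polar factor $\tilde{\v{G}}=\v{U}[\v{I}_r\ \v{0}]\v{V}^T$, with $\|\v{G}-\tilde{\v{G}}\|^2=\sum_{i=1}^{r}(\sigma_i-1)^2$. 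Hence $\|\v{G}-\tilde{\v{G}}\|\to 0$ is equivalent to $\sigma_i\to 1$ for every $i$; setting $\v{E}:=\v{G}-\tilde{\v{G}}$ we get $\|\v{E}\|\to 0$, while $\|\v{G}\|_2=\sigma_1\to 1$ and $\|\tilde{\v{G}}\|_2=1$, so the spectral norms stay bounded along the limit.

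\textbf{Step 2: perturbation bound.} Expanding,
\[
\v{G}\v{X}\v{X}^T\v{G}^T-\tilde{\v{G}}\v{X}\v{X}^T\tilde{\v{G}}^T=\v{E}\v{X}\v{X}^T\tilde{\v{G}}^T+\tilde{\v{G}}\v{X}\v{X}^T\v{E}^T+\v{E}\v{X}\v{X}^T\v{E}^T.
\]
Using $\|\v{A}\v{B}\|\le\|\v{A}\|_2\|\v{B}\|$ and $\|\v{X}\v{X}^T\|_2=\|\v{X}\|_2^2$, the Frobenius norm of the right-hand side is at most $\|\v{X}\|_2^2\,\|\v{E}\|\,(2\|\tilde{\v{G}}\|_2+\|\v{E}\|)=:C\|\v{E}\|$, with $C$ bounded as $\|\v{E}\|\to 0$. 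So with $\v{\Psi}:=\tilde{\v{G}}\v{X}\v{X}^T\tilde{\v{G}}^T$ (or its diagonal part) we obtain $\|\v{G}\v{X}\v{X}^T\v{G}^T-\v{\Psi}\|\le C\|\v{E}\|\to 0$, \emph{provided} $\v{\Psi}$ is diagonal.

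\textbf{Step 3 and the main obstacle.} The remaining---and essential---task is to argue $\v{\Psi}$ is diagonal, and this is the hard part: $\tilde{\v{G}}\in\mathcal{V}(r,c)$ only gives $\tilde{\v{G}}\tilde{\v{G}}^T=\v{I}_r$, that is, $\tilde{\v{G}}$ is an isometric projection of the second-moment matrix $\v{X}\v{X}^T$, which by itself does not make $\tilde{\v{G}}\v{X}\v{X}^T\tilde{\v{G}}^T$ diagonal for arbitrary $\v{X}$. The proof must therefore use the structure of the setting: either that the representation $\v{X}$ passed to the linear ICA block is (approximately) sphered---so $\v{X}\v{X}^T$ is (approximately) isotropic on $\mathrm{row}(\tilde{\v{G}})$ and $\v{\Psi}\propto\v{I}_r$---or that $\v{G}$ is a normalized Johnson--Lindenstrauss sketch, for which $\mathbb{E}[\v{G}\v{X}\v{X}^T\v{G}^T]=\tfrac{\mathrm{tr}(\v{X}\v{X}^T)}{c}\v{I}_r$ and concentration yields diagonality up to an error that can be folded into the $C\|\v{E}\|$ budget. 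Deciding which of these hypotheses is in force---and making the convergence uniform over it---is the crux; a minor remaining point is that $\tilde{\v{G}}$ is non-unique when $\v{G}$ is rank-deficient or has repeated singular values, which is generically excluded and does not affect the limit.
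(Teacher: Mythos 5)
Your overall strategy is the same as the paper's: both are perturbation arguments about the orthogonal polar factor $\tilde{\v{G}}$. The paper writes $\v{G}=\tilde{\v{G}}\v{P}$ and bounds $\|\v{G}\v{X}\v{X}^T\v{G}^T-\tilde{\v{G}}\v{X}\v{X}^T\tilde{\v{G}}^*\|$ by a constant times $\|\v{P}^2-\v{I}\|$, which its Proposition~1 shows is equivalent to $\|\v{G}-\tilde{\v{G}}\|\to 0$; you expand additively in $\v{E}=\v{G}-\tilde{\v{G}}$ and reach the same conclusion more directly. Your Steps 1--2 are correct and in fact tidier than the paper's chain of equalities, which at one point removes the rectangular $\tilde{\v{G}}$ from inside a Frobenius norm as though it were a full unitary; for a matrix with merely orthonormal rows that step is only an inequality $\le$, though the direction is the one needed for the bound.

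The obstacle you isolate in Step 3 is precisely the step the paper handles by assertion: it states that for any $\v{G}$ the projection satisfies $\tilde{\v{G}}\v{X}\v{X}^T\tilde{\v{G}}^*=\v{D}^2$, justified by the remark (citing Everson and Roberts) that the Stiefel manifold contains the decorrelating matrices. As you observe, $\tilde{\v{G}}\in\mathcal{V}(r,c)$ gives only $\tilde{\v{G}}\tilde{\v{G}}^*=\v{I}$, which does not make $\tilde{\v{G}}\v{X}\v{X}^T\tilde{\v{G}}^*$ diagonal for arbitrary $\v{X}$; the assertion holds only under an extra hypothesis such as $\v{X}\v{X}^T\propto\v{I}$ (pre-whitened features, under which every Stiefel matrix decorrelates trivially) or, for the JL construction, in expectation together with a concentration argument. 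So your proof is not closed at Step 3, but neither is the paper's at the corresponding point: the paper takes the diagonality of $\v{\Psi}:=\tilde{\v{G}}\v{X}\v{X}^T\tilde{\v{G}}^*$ as given. If you adopt that as a standing assumption on the representation fed to the linear ICA block, your perturbation bound completes the argument exactly as the paper's does.
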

Simply put, this shows that as a matrix $\v{G}$ approaches the Stiefel manifold $\mathcal{V}(r,c)$ the off-diagonal elements of the cross-correlation matrix of the projection $\v{G}\v{X}$ are ever smaller, so $\v{G}$ is ever more decorrelating. 
Given these properties we want our whitening matrix to lie close to the Stiefel manifold.

\subsubsection{Johnson-Lindenstrauss projections} 

\begin{figure*}[t]
\centering
\hspace{0mm}
\subfloat[Source Images]{

\begin{tabular}[Source Images]{c}
\includegraphics[width=0.08\textwidth]{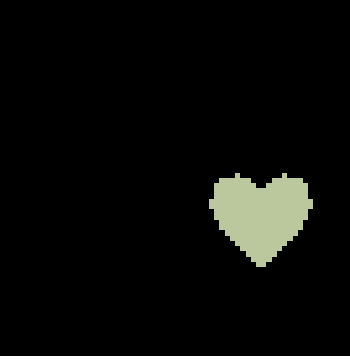}
\includegraphics[width=0.08\textwidth]{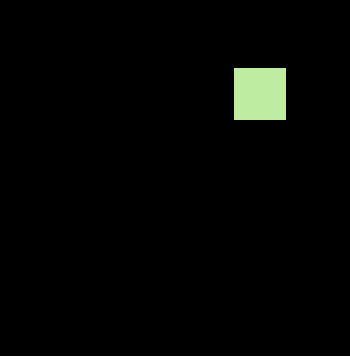} 
\\
\end{tabular}}
\hspace{0mm}
\subfloat[Mixed]{
\begin{tabular}[$\v{A}$ from FastICA]{c}
\includegraphics[width=0.08\textwidth]{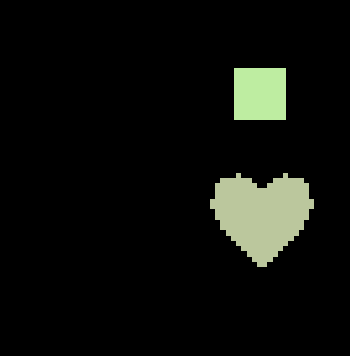}
\end{tabular}}
\hspace{0mm}
\subfloat[$\v{A}$ FastICA]{
\begin{tabular}[$\v{A}$ from FastICA]{c}
\includegraphics[width=0.08\textwidth]{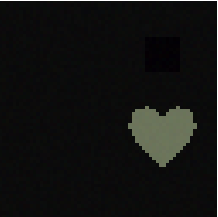}
\includegraphics[width=0.08\textwidth]{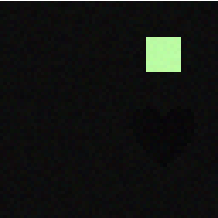} 
\end{tabular}}
\hspace{0mm}
\subfloat[$\v{A}$ JL-Cayley ICA]{
\begin{tabular}[$\v{A}$ from JL-Cayley ICA]{c}
\includegraphics[width=0.08\textwidth]{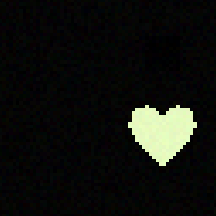}
\includegraphics[width=0.08\textwidth]{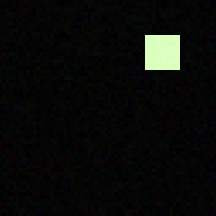} 
\end{tabular}} \\

\caption{Here we run linear ICA on a pair of images (a) that are mixed linearly ($\mathrm{mix} = w_1*\mathrm{image}_1 + w_2*\mathrm{image}_2$) (b) to form a dataset with 512 points.
In both cases $w_1$ and $w_2$ are sampled from a uniform distribution.
We plot the mixing matrix $\v{A}$ for our JL-Cayley model with a quasi-uniform $\GG$ prior with $\rho=10$ (c) and for FastICA  \citep{Hyvarinen1997} as a benchmark.
$\v{A}$ should recover the source images, which occurs for both models.
}
\vspace{-0.5cm}
\label{fig:linear_experiments}
\end{figure*}

By Theorem 1 we know that we want our whitening matrix to be close to $\mathcal{V}(d_s,d_x)$. 
How might we enforce this closeness? 
By the definition of the Stiefel manifold, we can intuit that a matrix $\v{G}$ will lie close to this manifold if  $\v{GG^T}\approx \v{I}$. 
We formalise this as:

\begin{theorem}

Let $\v{G}\in \mathbb{R}^{d_s\times d_x}$ and let $\tilde{\v{G}}$ be its projection onto $\mathcal{V}(d_s,d_x)$.  
As the Frobenius norm $\|\v{G}\v{G}^{T}-\v{I}\| \to 0$, we also have $\|\tilde{\v{G}}-{\v{G}}\| \to 0$. 
\label{th:2}
\end{theorem}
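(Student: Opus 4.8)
The plan is to work entirely with the singular value decomposition of $\v{G}$ and to exploit the unitary invariance of the Frobenius norm. Write the thin SVD $\v{G}=\v{U}\v{\Sigma}\v{V}^T$ with $\v{U}\in\mathbb{R}^{d_s\times d_s}$ orthogonal, $\v{V}\in\mathbb{R}^{d_x\times d_s}$ satisfying $\v{V}^T\v{V}=\v{I}$, and $\v{\Sigma}=\mathrm{diag}(\sigma_1,\dots,\sigma_{d_s})$ with $\sigma_i\ge 0$ (we take $d_s\le d_x$, as in the paper). The first step is to identify the Stiefel projection: I claim $\tilde{\v{G}}=\v{U}\v{V}^T$ is the Frobenius-closest element of $\mathcal{V}(d_s,d_x)$ to $\v{G}$. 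This is the standard orthogonal-Procrustes/polar-factor fact, which I would prove by noting that for any $\v{Q}$ with $\v{Q}\v{Q}^T=\v{I}$ one has $\|\v{G}-\v{Q}\|^2=\|\v{G}\|^2+d_s-2\,\mathrm{tr}(\v{Q}^T\v{G})$, so minimizing the distance amounts to maximizing $\mathrm{tr}(\v{Q}^T\v{U}\v{\Sigma}\v{V}^T)=\mathrm{tr}(\v{M}\v{\Sigma})$ with $\v{M}:=\v{V}^T\v{Q}^T\v{U}$; since $\v{M}\v{M}^T=\v{V}^T\v{Q}^T\v{Q}\v{V}\preceq\v{V}^T\v{V}=\v{I}$ (because $\v{Q}^T\v{Q}$ is an orthogonal projection), every diagonal entry of $\v{M}$ has modulus at most $1$, whence $\mathrm{tr}(\v{M}\v{\Sigma})=\sum_i M_{ii}\sigma_i\le\sum_i\sigma_i$, with equality for $\v{M}=\v{I}$, i.e. $\v{Q}=\v{U}\v{V}^T$.

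With the projection in hand, the second step is to rewrite both quantities of interest as functions of the singular values. By unitary invariance of $\|\cdot\|$,
\[
\|\v{G}\v{G}^T-\v{I}\| = \|\v{U}(\v{\Sigma}^2-\v{I})\v{U}^T\| = \Big(\textstyle\sum_{i=1}^{d_s}(\sigma_i^2-1)^2\Big)^{1/2},
\]
and likewise $\|\tilde{\v{G}}-\v{G}\| = \|\v{U}(\v{I}-\v{\Sigma})\v{V}^T\| = \big(\sum_{i=1}^{d_s}(1-\sigma_i)^2\big)^{1/2}$. The third step is an elementary pointwise bound: since $\sigma_i\ge 0$ we have $\sigma_i+1\ge 1$, hence $|\sigma_i-1|\le|\sigma_i-1|\,|\sigma_i+1|=|\sigma_i^2-1|$. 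Summing squares gives $\|\tilde{\v{G}}-\v{G}\|\le\|\v{G}\v{G}^T-\v{I}\|$, which is in fact a quantitative (Lipschitz-type) statement and immediately yields the claimed limit.

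The one point needing care — and the only real obstacle — is well-definedness of the Stiefel projection when $\v{G}$ is (nearly) rank-deficient: if some $\sigma_i=0$ the polar factor $\v{U}\v{V}^T$ is not unique. This is harmless here, because $\|\v{G}\v{G}^T-\v{I}\|<\varepsilon$ forces $(\sigma_i^2-1)^2<\varepsilon^2$ for every $i$, so as soon as $\varepsilon<1$ all $\sigma_i>0$ and $\tilde{\v{G}}=\v{U}\v{V}^T$ is the \emph{unique} minimizer; the regime in which the conclusion is asserted is exactly the regime in which $\tilde{\v{G}}$ is unambiguous. I would close by remarking that this SVD bookkeeping is the natural companion to Theorem~\ref{th:1}: together they say that driving $\|\v{G}\v{G}^T-\v{I}\|$ small both pins $\v{G}$ near the Stiefel manifold and makes $\v{G}$ decorrelating, which is the justification for the ``approximately-Stiefel'' parameterisation used for $\v{A}^+$.
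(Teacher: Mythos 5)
Your proof is correct, and it reaches the same destination as the paper's by a recognisably parallel but more self-contained and more quantitative route. The paper works with the polar decomposition $\v{G}=\tilde{\v{G}}\v{P}$, cites \citet{Absil2012} for the fact that the orthogonal polar factor is the Stiefel projection, shows $\|\v{G}\v{G}^*-\v{I}\|=\|\v{P}^2-\v{I}\|$ and (via its Proposition 1) $\|\v{G}-\tilde{\v{G}}\|=\|\v{P}-\v{I}\|$, and then argues qualitatively that positive-semidefiniteness of $\v{P}$ lets one pass from $\|\v{P}^2-\v{I}\|\to 0$ to $\|\v{P}-\v{I}\|\to 0$. You instead diagonalise everything through the thin SVD, prove the Procrustes characterisation of the projection from scratch rather than citing it, and replace the limit argument by the scalar inequality $|\sigma_i-1|\le|\sigma_i^2-1|$, which upgrades the theorem to the explicit Lipschitz-type bound $\|\tilde{\v{G}}-\v{G}\|\le\|\v{G}\v{G}^T-\v{I}\|$. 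This buys you three things the paper's version lacks: a self-contained derivation, a rate (not just a limit), and a clean treatment of the two points the paper glosses over --- the non-uniqueness of the polar factor when $\v{G}$ is rank-deficient (you correctly observe that $\|\v{G}\v{G}^T-\v{I}\|<1$ forces all $\sigma_i>0$, so the projection is unique exactly in the regime where the claim is used), and the fact that ``unitary invariance'' of the Frobenius norm must be applied with care for rectangular factors (your trace computation $\|\v{U}(\v{I}-\v{\Sigma})\v{V}^T\|^2=\mathrm{tr}((\v{I}-\v{\Sigma})^2\v{V}^T\v{V})$ makes explicit what the paper's appeal to invariance under $\tilde{\v{G}}^*$ leaves implicit). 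The paper's approach is shorter because it leans on Proposition 1, which it reuses for Theorem 1; yours is the better standalone proof.
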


The proof for this is presented in Appendix \ref{app:close}. Using this theorem, we now propose an alternative to SVD-based whitening. 
Instead of having $\v{W}=\v{\Sigma}^{-1}\v{U}^T$ be the result of SVD on the data matrix, we define our whitening matrix as a data-independent Johnson–Lindenstrauss transform.
We must ensure that $\v{W}$, our rectangular matrix, is approximately orthogonal, lying close to the manifold $\mathcal{V}(d_s,d_x)$.
More formally by Theorem \ref{th:2}, our goal is to construct a rectangular matrix $\v{W}$ such that $\v{W}\v{W}^T \approx \v{I}$.

We construct approximately orthogonal matrices for $\v{W}$ by way of Johnson-Lindenstrauss (JL) Projections \citep{Johnson1984}.
A JL projection $\v{W}$ for $\mathbb{R}^{d_x} \rightarrow \mathbb{R}^{d_s}$ is sampled at initialisation from a simple binary distribution \cite{Achlioptas2003}:
\begin{align}
W_{i,j} = \begin{cases}
      +1/\sqrt{d_s}, & \text{with probability } \frac{1}{2}  \\
      -1/\sqrt{d_s}, & \text{with probability } \frac{1}{2}
    \end{cases}
\label{eq:jlbin1}
\end{align}
This distribution satisfies $\mathbb{E}[\v{W}\v{W}^{T}] = \v{I}$, and such a draw has $\v{W}\v{W}^T \approx \v{I}$. 
We choose to fix $\v{W}$ after initialisation such that $\v{A}^+=\v{W}$ never updates, greatly simplifying optimisation.

\section{Experiments}
\label{sec:exps}
\begin{figure*}[t!]
   \centering
    
    \subfloat[Sample Images]{

\begin{tabular}[Samples]{c}
\includegraphics[width=0.06\textwidth]{figures/dsprites/multi/sample1.png}
\includegraphics[width=0.06\textwidth]{figures/dsprites/multi/sample2.png}
\\
\end{tabular}}
\hspace{1cm}
\subfloat[VAE Latent Traversals]{
\begin{tabular}[Samples]{c}
\includegraphics[width=0.25\textwidth]{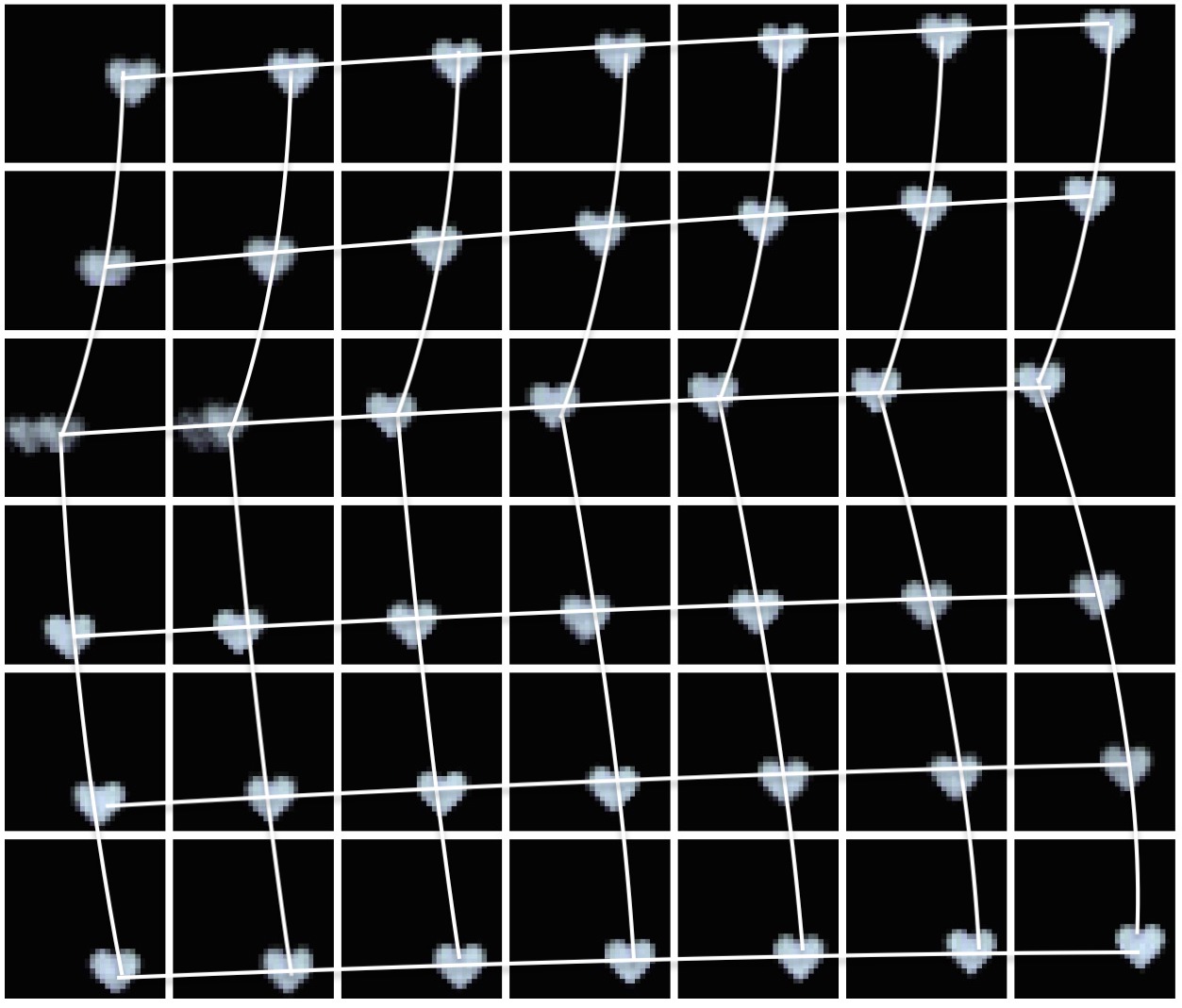}
\end{tabular}}
\hspace{1cm}
\subfloat[Bijecta Latent Traversals]{
\begin{tabular}[Samples]{c}
\includegraphics[width=0.25\textwidth]{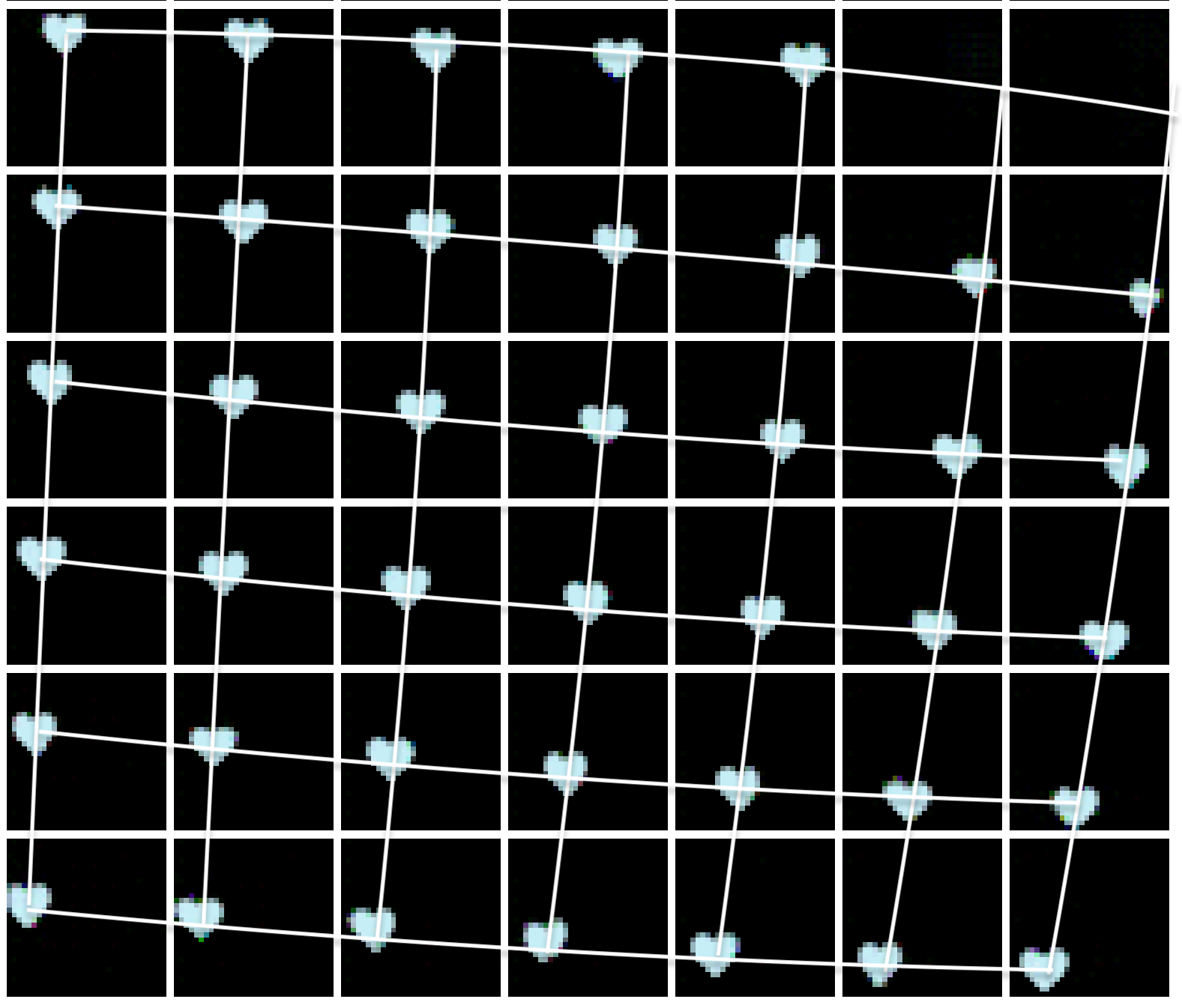}
\end{tabular}}

\hspace{0mm}

    \caption{
    Here we demonstrate that Bijecta is capable of unmixing non-linearly mixed sources, better than VAEs with ICA-appropriate priors. 
    We take a dSprites heart and, using a randomly sampled affine transformation, move it around a 32 by 32 background (a). 
    With 2-D $\GG$ priors with $\rho=10$ for a convolutional VAE (b) and for Bijecta (c) we plot the generations resulting from traversing the 2-D latent-source space in a square around the origin.
    We sketch the learnt axis of movement of the sprite with white lines.
    In (b) the VAE does not ascribe consistent meaning to its latent dimensions.
    It has failed to discover consistent independent latent sources: it has a sudden change in the learnt axes of movement along the second dimension, as seen by the kink in the white vertical lines.  
    In (c) Bijecta is able to learn a simple affine transformation along each latent dimension, consistently spanning the space.
    In Fig \ref{fig:dsprites-affine-latents} we show the posterior distributions of both these models and show that Bijecta is better able to match the GG prior than the VAE, supporting our findings here. 
    }
    \label{fig:dpsrites-affine}
    \vspace{-0.2cm}
\end{figure*}

Here we show that our approach outperforms VAEs and flows with ICA-priors at discovering ICA sources in image data. 
But first, as a sanity check, we show that a linear ICA model using JL projections to whiten can successfully unmix linearly mixed sources in Fig \ref{fig:linear_experiments}.
For details on how to implement such a linear ICA model, see Appendix \ref{app:linear_ica}. 
We take a pair of images from dSprites and create linear mixtures of them.
We see that linear ICA with JL projections can successfully discover the true sources, the images used to create the mixtures, in the columns of $\v{A}$.

\begin{figure*}[h!]
\centering
\subfloat[Bijecta Factorised Posterior]{\includegraphics[height=3.0cm]{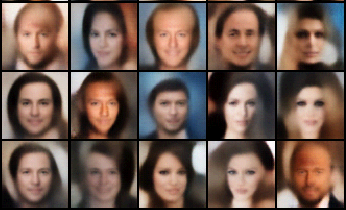}
\label{fig:marginalpostsamples}}
\hspace{0.0cm}
 \raisebox{0.03cm}{
\subfloat[Bijecta latent 5 traversal]{\begin{tabular}[b]{c}%
\vspace{-0.15cm}
\adjincludegraphics[height=0.995cm,trim={0 0 0 {0.01\height}},clip]{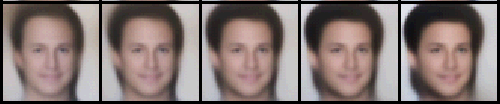}\\
\vspace{-0.15cm}
\adjincludegraphics[height=1.0cm,trim={0 0 {0.02\height} {0.01\height}},clip]{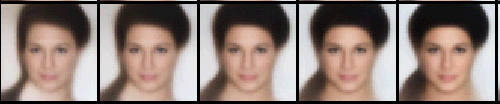} \\
\vspace{-0.14cm}
\adjincludegraphics[height=0.993cm,trim={0 {0.02\height} {0.02\height} 0 },clip]{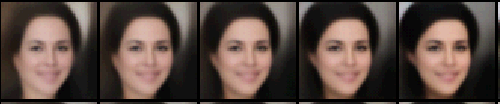}
\end{tabular}}}
\hspace{0.0cm}
 \subfloat[RQS flow traversal]{\includegraphics[height=3.0cm]{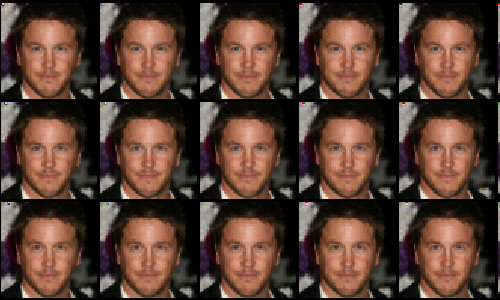}}
     \caption{
     (a) shows decodings from an 8-layer Bijecta ($d_s=32$) trained on CelebA with a Laplace prior (GG $\rho=1$) where we sample from the factorised approximation to Bijecta's posterior.
     See Fig \ref{fig:post-samples-celeba} for more such samples. 
     (b) shows latent traversals for 3 different datapoints all along the same axis-aligned direction, for this same model.
     (c) shows traversals for a single embedded training datapoint from CelebA moving along 3 latent directions in an RQS flow with Laplace base distribution. 
     Though we have selected 3 dimensions, all $\mathcal{Z}$ dimensions had similar latent traversals.
     In (b-c) Images in the center correspond to the original latent space embedding, on either side we move up to 6 standard deviations away along this direction with other dimensions remaining fixed.
     The flow has not discovered axis-aligned transforms, whereas Bijecta has learned informative latent dimensions:
     here the dimension encodes hair thickness. 
     Note that identity is maintained throughout and that the transform is consistent across different posterior samples.
     See Appendix \ref{app:traversals} for gallery of transforms for Bijecta. 
    }
     \label{fig:latent_traversal}
     \vspace{-0.1cm}
\end{figure*}

\paragraph{Affine Data}
Given that we have established that our novel theory for decorrelating matrices can produce standalone linear ICA models, we now want to ascertain that our hybrid model performs well in non-linear mixing situations. 
To do so we create a dataset consisting of a subset of dSprites where we have a light-blue heart randomly uniformly placed on a black field.
The true latent sources behind these randomly sampled affine transformations are simply the coordinates of the heart. 
First, in Fig  \ref{fig:dsprites-affine-latents} we demonstrate that linear ICA  models are unable to uncover the true latent sources.
As expected non-linear mixing regimes motivate the use of flexible non-linear models.

We now demonstrate that Bijecta can uncover the latent sources underpinning these affine transformations, whereas VAEs with ICA-appropriate priors fail to do so. 
For details of VAE architecture, see Appendix \ref{app:arch}.
These VAEs are able to learn to reconstruct data well, but the learnt latent space does not correspond to the underlying statistically independent sources (see Figs \ref{fig:dpsrites-demo} and \ref{fig:dsprites-affine-latents}). 
In fact for VAEs the effect of the latent variables is not consistent throughout the latent space, as seen in Fig \ref{fig:dpsrites-affine}. 
For Bijecta, the learnt latent space corresponds to the underlying statistically independent sources (see Figs \ref{fig:dpsrites-demo} and \ref{fig:dsprites-affine-latents}), and the meaning of the latent variables is consistent in Fig \ref{fig:dpsrites-affine}.
Further in Fig \ref{fig:dpsrites-affine} the model seems able to extrapolate outside the training domain: it generates images where the heart is partially rendered at the edges of the frame, even removing the heart entirely at times, even though such images are not in the training set.

\paragraph{Natural Images}
The previous experiments show that our model is capable of isolating independent sources on toy data. 
We complement this finding with experiments on a more complex natural image dataset, CelebA, and show that here too our model outperforms VAEs in learning factorisable representations. 

An ersatz test of this can be done by synthesising images where we sample from a factorised approximation of Bijecta's posterior. 
If the learned latent sources are actually independent, then the posterior over latent sources given the entire dataset should factorize into a product across dimensions, i.e. $q(\v{s}) = \prod_i q(\v{s}_i)$.
In this case, we can fit an approximation to the posterior by fitting $d_s$ independent one-dimensional density estimates on $q(\v{s}_i)$. If the sources are not independent, then this factorized approximation to the posterior will be missing important correlations and dependencies.
In Fig~\ref{fig:marginalpostsamples} samples from this factorised approximation look reasonable, suggesting that Bijecta has learnt representations that are statistically independent.

\begin{figure*}[t]
\centering
\includegraphics[width=0.45\textwidth]{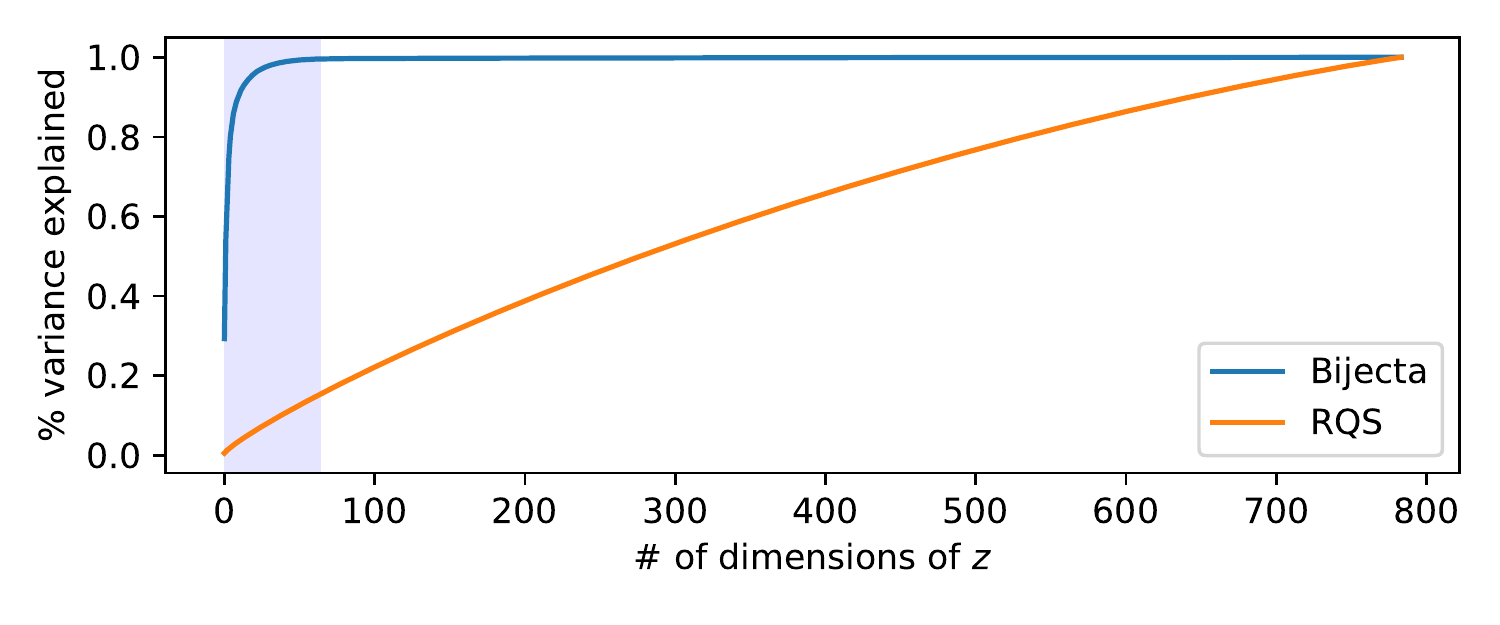}
\includegraphics[width=0.45\textwidth]{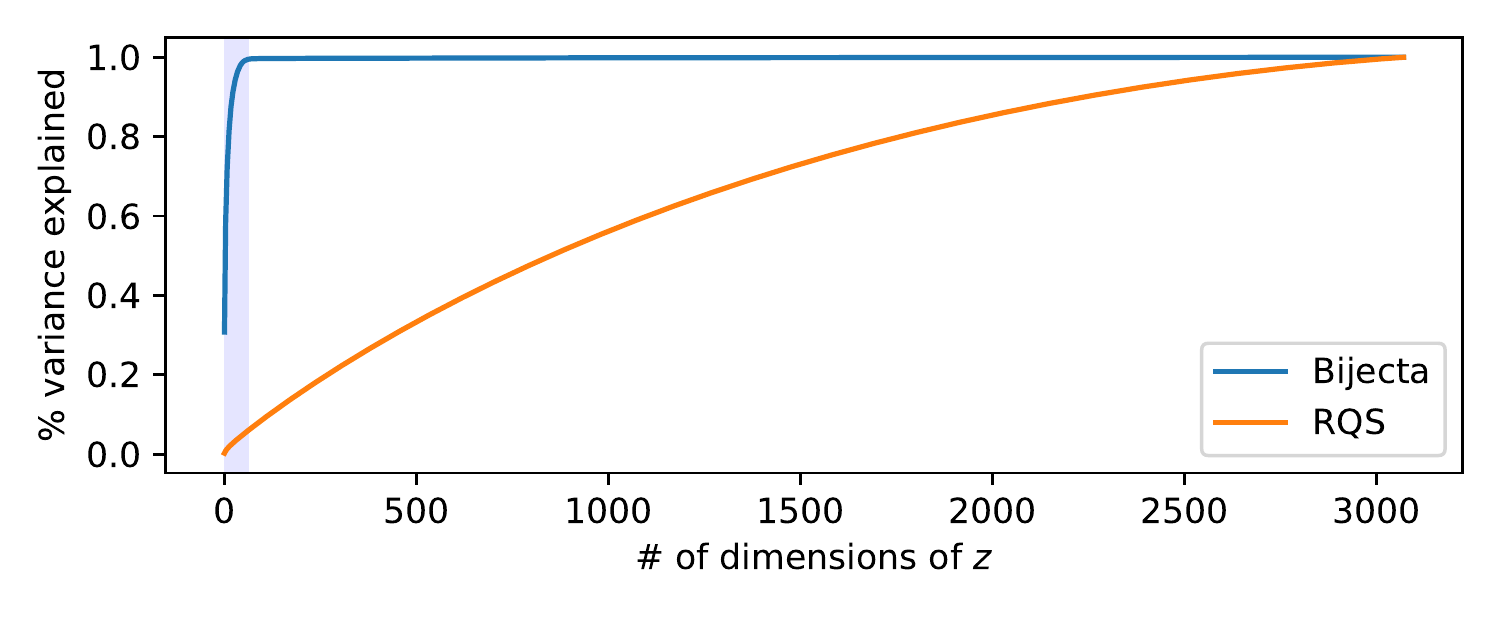}
\caption{Explained variance plots for the embedding in $\mathcal{Z}$, as measured by the sums of the eigenvalues of the covariance matrix of the embeddings, for both our Bijecta model and for an RQS model of equivalent size trained with a Laplace base distribution (GG distribution with $\rho=1$). For both Fashion-MNIST (left) and CIFAR 10 (right) datasets we see that the Bijecta model has learned a compressive flow, where most of the variance can be explained by only a few linear projections. The shaded region denotes the first 64 dimensions, corresponding to the size of the target source embedding $\mathcal{S}$.}
\vspace{-0.3cm}
\label{fig:variance_explained}
\end{figure*}
To quantify this source-separation numerically, we measure the total correlation (TC) of the aggregate posteriors of Bijecta ($q(\v{s}|\v{z})$) and VAEs ($q(\v{z}|\v{x})$) as \cite{Chen2018} do.
Intuitively, the TC measures how well a distribution is approximated by the product of its marginals -- or how much information is shared between variables due to dependence \citep{Watanabe1960}.
It directly measures how well an ICA model has learnt decorrelated and independent latent representations \citep{Everson2001}. 
Formally, it is the $\mathrm{KL}$ divergence between a distribution $r(\cdot)$ and a factorised representation of the distribution: \(\mathrm{TC} = \mathrm{KL}(r(\v{s})||\prod_i r(s_i)),\) where $i$ indexes over (latent) dimensions.

In Table \ref{table:tc-res}, we show that Bijecta learns an aggregate posterior with significantly lower TC values than both VAEs with Laplace priors, \textit{and} $\beta$-TCVAEs -- which in their training objective penalise the TC by a factor $\beta$ \citep{Chen2018}.
Our model has learnt a better ICA solution.
We also include numerical results in Appendix \ref{app:num_results} showing that Bijecta outperforms linear ICA on a variety of natural image datasets. 

\begin{table}[h]
  \caption{Total Correlation Results: We evaluate the source separation of different models on CelebA via the TC of the validation set embeddings in the 32-D latent space of: Laplace prior VAEs, $\beta$-TCVAEs ($\beta=15$), and Bijecta with a Laplace prior ($\pm$ indicates the standard deviation over 2 runs). VAEs use the same architecture and training as \cite{Chen2018}.}
  
  \label{table:tc-res}
  \centering
  \resizebox{\columnwidth}{!}{%
  \begin{tabular}{rccc}
    \toprule
                            & Laplace-VAE & $\beta$-TCVAE & Laplace-Bijecta\\
        \midrule
                                TC: & $106.7 \pm 0.9$  & $55.7 \pm 0.1$ & $\mathbf{13.1 \pm 0.4}$ \\
    \bottomrule
  \end{tabular}}
\end{table}

\paragraph{Dimensionality reduction on flow models}
To conclude, having shown that Bijecta outperforms VAEs on a variety of non-linear ICA tasks, we now contrast our model's ability to automatically uncover sources relative to flow models with heavy-tailed base distributions.
We do so by measuring the cumulative explained variance by the dimensions in $\mathcal{Z}$ for both models.
If a small number of dimensions explains most of $\mathcal{Z}$'s variance then the model has learnt a bijection which only requires a small number of dimensions to be invertible. 
It has in effect learnt the generating sources underpinning the data. 

In Fig~\ref{fig:variance_explained} we show that Bijecta induces better-compressed representations in $\mathcal{Z}$ than non-compressive flows.
We plot the eigenvalues of the covariance matrix on the output of the flow, i.e.\ on $\mathrm{Cov}(f(\v{X}))$, to see how much of the total variance in the learned feature space $\mathcal{Z}$ can be explained in a few dimensions.
In doing so we see that a flow trained jointly with a linear ICA model with $d_s = 64$ effectively concentrates variation into a small number of intrinsic dimensions; this is in stark contrast with the RQS flows trained with only a Laplace base distribution.
This demonstrates that our model is able to automatically detect relevant directions on a low dimensional manifold in $\mathcal{Z}$, and that the bijective component of our model is better able to isolate latent sources than a standard flow.

For a visual illustration of this source separation we show the difference in generated images resulting from smoothly varying along each dimension in $\mathcal{S}$ for Bijecta models and in $\mathcal{Z}$ for flows in Fig \ref{fig:latent_traversal}.
Bijecta is clearly able to discover latent sources, whereby it learns axis-aligned transformations of CelebA faces, whereas a flow with equivalent computational budget and a heavy-tailed base distribution is not able to.

All flow-based baselines are trained using the objective in Eq (\ref{eq:deepflow}), using Real-NVP style factoring-out \citep{Durkan2019,Dinh2015}, and are matched in size and neural network architectures to the flows of Bijecta models.
See Appendix \ref{app:arch} for more details.

\section{Related Work}
\label{related_work}
One approach to extend ICA to non-linear settings is to have a non-linear mapping acting on the independent sources and data \citep{Burel1992, Deco1995, Yang1998, Valpola2003}.
In general, non-linear ICA models have been shown to be hard to train, having problems of unidentifiability:
the model has numerous local minima it can reach under its training objective, each with potentially different learnt sources~\citep{Hyvarinen1999, Karhunen2001, Almeida2003, Hyvarinen2019}.
Some non-linear ICA models have been specified with additional structure to reduce the space of potential solutions, such as putting priors on variables \citep{Lappalainen2000} or specifying the precise non-linear functions involved \citep{Lee1997a, Taleb2002}, 
Recent work shows that conditioning the source distributions on some always-observed side information, say time index, can be sufficient to induce identifiability in non-linear ICA \citep{Khemakhem2019}.

Modern flows were first proposed as an approach to non-linear square ICA \citep{Dinh2015}, but are also motivated by desires for more expressive priors and posteriors \citep{Kingma, deepflows}.
Early approaches, known as symplectic maps \citep{Deco1995, Parra1995, Parra1996}, were also proposed for use with ICA.
Flows offer expressive dimensionality-preserving (and sometimes volume-preserving) bijective mappings \citep{Dinh2017, Kingma2018}.
Flows have been used to provide feature extraction for linear discriminative models \citep{Nalisnick2019}.
Orthogonal transforms have been used in normalizing flows before, to improve the optimisation properties of Sylvester flows \citep{VanDenBerg2018, Golinski2019}.
Researchers have also looked at constraining neural network weights to the Stiefel-manifold \citep{Li2020}.

\section{Conclusion}
We have developed a method for performing non-linear ICA large high-dimensional image datasets which combines state-of-the-art flow-based models and a novel theoretically grounded linear ICA method.
This model succeeds where existing probabilistic deep generative models fail: 
its constituent flow is able to learn a representation, lying in a low dimensional manifold in $\mathcal{Z}$, under which sources are separable by linear unmixing. 
In source space $\mathcal{S}$, this model learns a low dimensional, explanatory set of statistically independent latent sources.

\nocite{Absil2012}
\nocite{Kingma2015}
\clearpage
\newpage
\acknowledgments{
This research was directly funded by the Alan Turing Institute under Engineering and Physical Sciences Research Council (EPSRC) grant EP/N510129/1.
AC was supported by an EPSRC Studentship.
MW was supported by EPSRC grant EP/G03706X/1.
CH was supported by the Medical Research Council, the Engineering and Physical Sciences Research Council, Health Data Research UK, and the Li Ka Shing Foundation
SR gratefully acknowledges support from the UK Royal Academy of Engineering and the Oxford-Man Institute.

We thank Tomas Lazauskas, Jim Madge and Oscar Giles from the Alan Turing Institute's Research Engineering team for their help and support.}
\bibliographystyle{apalike2}
\bibliography{references_mend}

\newpage

\appendix
\setcounter{equation}{0}
\renewcommand\thefigure{\thesection.\arabic{figure}}
\setcounter{figure}{0}
 \onecolumn
\clearpage
  \hsize\textwidth
  \linewidth\hsize \toptitlebar {\centering
  {\Large\bfseries Appendix for Learning Bijective Feature Maps for Linear ICA \par}}
 \bottomtitlebar \vskip 0.1in
 \thispagestyle{appendixpage}

\section{Correlated and Dependent Sources}
\label{app:correlationVSIndependence}

\begin{figure}[h]
\begin{minipage}[c]{0.3\textwidth}

    \begin{center}
    \includegraphics[width=0.7\textwidth]{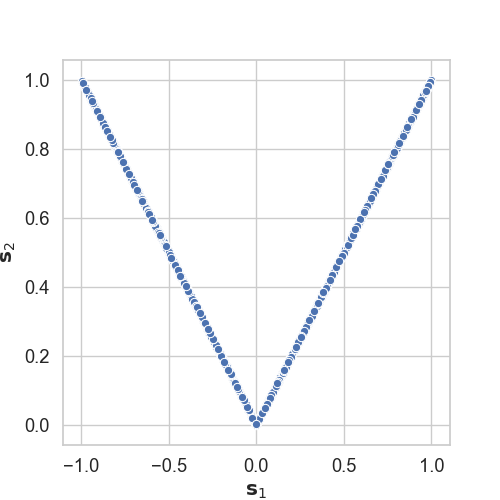}
    \end{center}
\end{minipage}
\begin{minipage}[c]{0.48\textwidth}
     \caption{Sources can be uncorrelated \textit{and} dependent. Consider our first source $\v{s}_1$ to be uniformly distributed on the interval $[-1, 1]$. If $\v{s}_1 \leq 0$, then $\v{s}_2=-\v{s}_1$, else $\v{s}_2=\v{s}_1$. In this case the variables are uncorrelated, $\expect[\v{s}_1\v{s}_2]=0$, but the joint distribution of $\v{s}_1$ and $\v{s}_2$ is not uniform on the rectangle $[-1, 1]\times[0, 1]$, as it would be if they were independent. See plot to the left for an illustration of this.}
     \label{fig:correlationVSIndependence}
\end{minipage}
\end{figure}

\section{VAEs underperform on ICA}
\label{app:post_plots}

\begin{figure*}[h]
    \centering
     \subfloat[][Linear ICA]{\includegraphics[width=0.23\textwidth]{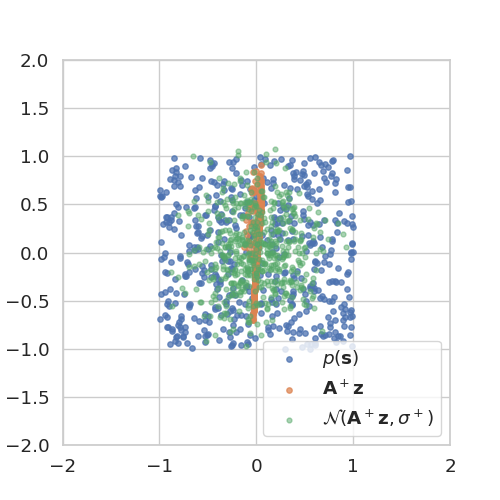}}
      \hspace{0.5cm}
    \subfloat[][Bijecta]{\includegraphics[width=0.23\textwidth]{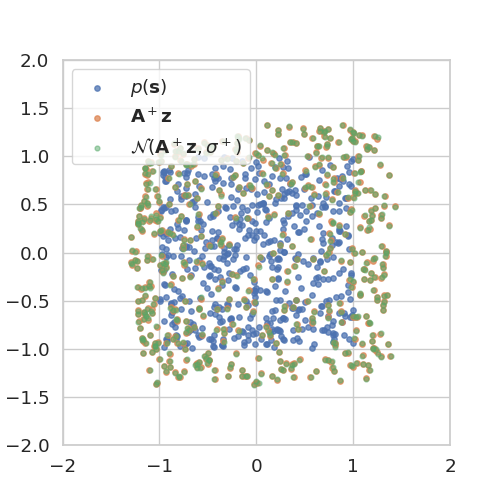}} 
    \hspace{0.5cm}
    \subfloat[][VAE]{\includegraphics[width=0.23\textwidth]{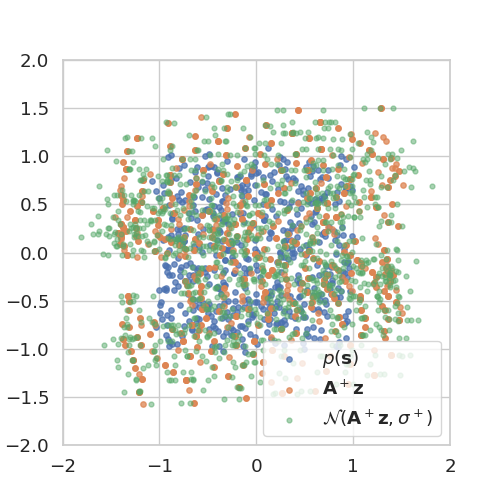}} 
    \caption{In (a), (b) we run linear ICA and a single-layer Bijecta on the affine transformation dataset of Fig \ref{fig:dpsrites-affine}.
    We take a dSprites heart and using a randomly sampled affine transformation, move it around a 32 by 32 background.
    We plot the posterior distribution $\mathcal{N}(\v{A}^+\v{z},\v{\sigma}^+)$ (green) and its mean $\v{A}^+\v{z}$ (orange) for both models. Clearly the posterior from Bijecta is better able to match the quasi-uniform $\GG$ prior with $\rho=10$ ($p(\v{s})$ in blue) than the linear ICA model, highlighting that the addition of the flow allows for linear unmixing. 
    Similarly the large VAE of Fig \ref{fig:dpsrites-affine} is unable to match this prior, forming two posterior `lobes' around the prior.}
    \label{fig:dsprites-affine-latents}
\end{figure*}

\newpage
\section{Proof of optimality}
\label{app:optimal}

\textit{Definition: We say a matrix $\v{B}'$ is strictly more orthogonal than a matrix $\v{B}$ if $\|\v{B'}^T\v{B'} - \v{I}\| < \|\v{B}^T\v{B} - \v{I}\|$}. \\

\setcounter{theorem}{0}

\begin{theorem}
As the Frobenius norm $\|\v{G} - \tilde{\v{G}}\| \to 0$, 
where $\v{G} \in \mathbb{R}^{r \times c}$ and $\tilde{\v{G}}$ is the projection of $\v{G}$ onto $\mathcal{V}(r,c)$ , $\|\v{G}\v{X}\v{X}^T\v{G}^T - \v{D}\| \to 0$, where $\v{G}\v{X}\v{X}^T\v{G}^T$ is the cross-correlation of the projection of data $\v{X}$ by $\v{G}$, and $\v{D}$ is some diagonal matrix. 

\end{theorem}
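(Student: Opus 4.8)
The plan is to read this as a continuity statement for the quadratic map $\v{G}\mapsto \v{G}\v{X}\v{X}^T\v{G}^T$, together with the fact that the value of this map at a point of the Stiefel manifold is diagonal. Write $\v{G}=\tilde{\v{G}}+\v{E}$ with $\v{E}:=\v{G}-\tilde{\v{G}}$, so the hypothesis is exactly $\|\v{E}\|\to 0$ in Frobenius norm. Expanding the product gives
\[
\v{G}\v{X}\v{X}^T\v{G}^T = \tilde{\v{G}}\v{X}\v{X}^T\tilde{\v{G}}^T + \v{E}\v{X}\v{X}^T\tilde{\v{G}}^T + \tilde{\v{G}}\v{X}\v{X}^T\v{E}^T + \v{E}\v{X}\v{X}^T\v{E}^T .
\]
I would then bound the three error terms using submultiplicativity of the Frobenius norm, the fact that $\|\tilde{\v{G}}\| = \sqrt{r}$ (immediate from $\tilde{\v{G}}\tilde{\v{G}}^T=\v{I}_r$, since $\tilde{\v{G}}\in\mathcal{V}(r,c)$), and the fact that $\|\v{X}\v{X}^T\|$ is a fixed finite constant depending only on the data. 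Each error term is then $O(\|\v{E}\|)$ (the last one even $O(\|\v{E}\|^2)$), so all three vanish as $\v{G}$ approaches the manifold. Setting $\v{D}:=\tilde{\v{G}}\v{X}\v{X}^T\tilde{\v{G}}^T$ yields $\|\v{G}\v{X}\v{X}^T\v{G}^T-\v{D}\|\to 0$, and in fact at a rate linear in $\|\v{G}-\tilde{\v{G}}\|$, which is what makes the informal phrase ``ever more decorrelating'' precise.

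It then remains to show $\v{D}$ is diagonal. Here I would invoke the characterisation from Section~\ref{sec:matrix-manifolds}: a matrix on $\mathcal{V}(r,c)$ is an orthogonal decorrelating matrix, so by the very definition of ``decorrelating'' the cross-correlation $\tilde{\v{G}}\v{X}\v{X}^T\tilde{\v{G}}^T$ of the data it projects is diagonal. Equivalently, in the standard ICA pipeline where $\v{X}$ has already been whitened (so $\v{X}\v{X}^T=\v{I}$), one gets $\v{D}=\tilde{\v{G}}\tilde{\v{G}}^T=\v{I}_r$ outright, trivially diagonal; I would state whichever assumption is being used explicitly at the start of the proof.

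The analytic part is routine; the one place that needs care is this last step, and it is the real obstacle. The projection $\tilde{\v{G}}$ onto $\mathcal{V}(r,c)$ is \emph{not} automatically a decorrelating map for an arbitrary data matrix $\v{X}$ — its rows would have to be aligned with eigenvectors of $\v{X}\v{X}^T$ — so the conclusion genuinely relies on $\v{X}$ having been pre-whitened (or, equivalently, on the candidate matrices having been restricted to the decorrelating family $\v{\Phi}\v{R}\v{W}$ of Eq.~\eqref{eq:eq:classic_ica_unmix}). Given the whiten-then-rotate structure of Section~\ref{sec:matrix-manifolds}, this is the intended setting, and once it is assumed the theorem follows from the perturbation expansion above.
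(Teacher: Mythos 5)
Your proof is correct in its analytic core and takes a genuinely different route from the paper's. You perturb \emph{additively}, writing $\v{G}=\tilde{\v{G}}+\v{E}$, expanding the quadratic form, and killing the three cross terms by submultiplicativity of the Frobenius norm together with $\|\tilde{\v{G}}\|=\sqrt{r}$. The paper instead perturbs \emph{multiplicatively} via the polar decomposition $\v{G}=\tilde{\v{G}}\v{P}$: it rewrites the error as $\|\v{P}\v{X}\v{X}^T\v{P}^T-\v{X}\v{X}^T\|$ using unitary invariance, bounds this by $\|\v{P}^2-\v{I}\|\,\|\v{X}\v{X}^T\|$, and then needs a separate result (Proposition~\ref{prop:1}) to convert $\|\v{P}^2-\v{I}\|\to 0$ into $\|\v{G}-\tilde{\v{G}}\|\to 0$. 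Your version is more elementary, needs no polar factor, and delivers an explicit rate linear in $\|\v{G}-\tilde{\v{G}}\|$; what the paper's version buys is the intermediate quantity $\|\v{P}^2-\v{I}\|=\|\v{G}\v{G}^*-\v{I}\|$, which is precisely the criterion recycled in Theorem~\ref{th:2} to justify the Johnson--Lindenstrauss construction, so the two theorems share machinery. Note also that the paper's invariance step $\|\tilde{\v{G}}M\tilde{\v{G}}^*\|=\|M\|$ is only an inequality for rectangular $\tilde{\v{G}}$ with orthonormal rows, whereas your direct bounds sidestep this. Finally, both arguments hinge on the step you isolate at the end: the diagonality of $\v{D}=\tilde{\v{G}}\v{X}\v{X}^T\tilde{\v{G}}^*$. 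The paper simply asserts that the Stiefel projection of \emph{any} matrix satisfies $\tilde{\v{G}}\v{X}\v{X}^T\tilde{\v{G}}^*=\v{D}^2$, which is false for arbitrary $\v{X}$ unless the rows of $\tilde{\v{G}}$ align with eigenvectors of $\v{X}\v{X}^T$ or the data has been whitened; your explicit statement of this hypothesis is a point where your write-up is more careful than the paper's own proof, not a defect in your argument.
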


\begin{proof}

The Stiefel manifold is given by (assuming $r < c$ ): 

\begin{equation}
   \mathcal{V}(r,c) = \{\v{G} \in  \mathbb{R}^{r\times c}: \v{G}\v{G}^T=\v{I}\}
   \label{eq:stiefel}
\end{equation}

The unique projection $\tilde{\v{G}}$ onto this manifold of a matrix $\v{G} \in \mathbb{R}^{d_s \times d_x}$, with polar decomposition  $\v{U}\v{P}$, is simply $\v{U}$ \citep{Absil2012}.
$\v{P}$ denotes a $c \times c$  positive-semidefinite Hermitian matrix and $\v{U}$ is a $r \times c$ orthogonal matrix, i.e $\v{U} \in \mathcal{V}(r,c)$ and has a conjugate transpose denoted $\v{U}^*$ such that $\v{U}\v{U}^*=\v{I}$.
As such given any matrix $\v{G}$ we have a polar decomposition $\v{G}=\tilde{\v{G}}\v{P}$, where $\tilde{\v{G}}$, a linear decorrelating matrix, is the projection onto $\mathcal{V}(r,c)$ and $\v{P}$ denotes a $r\times c$  positive-semidefinite Hermitian matrix.

Let  $\v{G} \in \mathbb{R}^{r \times c}$ be some projection matrix of data $\v{X}$. We have $\v{S}= \v{G}\v{X}$.
The cross-correlation is expressed as $\v{S}\v{S^T} =\v{G}\v{X}\v{X}^T\v{G}^T$. 
In the case where $\v{S}$ is perfectly decorrelated, we have: 
$\v{S}\v{S^T} = \v{D}^2$ where $\v{D}$ is a diagonal matrix. 
We know that the Stiefel manifold $\mathcal{V}(r,c)$ (defined in Eq (\ref{eq:stiefel})) holds the set of all whitening matrices \citep{Everson1999}, up to the diagonal matrix $\v{D}^{-1}$.

For any matrix $\v{G}$ and its projection $\tilde{\v{G}}$ onto $\mathcal{V}(r,c)$, we have $\tilde{\v{G}}\v{X}\v{X}^T\tilde{\v{G}}^{*} = \v{D}^2$, where $\tilde{\v{G}}^{*}$ is the complex conjugate of $\tilde{\v{G}}$.
Consequently, given that the Frobenius norm is unitary invariant and the fact that $\tilde{\v{G}}$ is unitary: 

\begin{align*}
   \|\v{G}\v{X}\v{X}^T\v{G}^T - \v{D}^2\| 
   &= \|\tilde{\v{G}}\v{P}\v{X}\v{X}^T\v{P^T}\tilde{\v{G}}^* - \v{D}^2\|  \nonumber \\
   &= \|\tilde{\v{G}}\v{P}\v{X}\v{X}^T\v{P}^T\tilde{\v{G}}^* - \tilde{\v{G}}\v{X}\v{X}^T\tilde{\v{G}}^*\|  \nonumber \\
   &= \|\v{P}\v{X}\v{X}^T\v{P}^T - \v{X}\v{X}^T\|  \nonumber \\
   &\leq \|\v{P}^2 - \v{I}\| \ \|\v{X}\v{X}^T\|  \nonumber  
\end{align*}

The last line comes from the triangle inequality and that $\v{P}$ is Hermitian so $\v{P}\v{P}^T = \v{P}^2$.
As $\|\v{X}\v{X}^T\|$ is a constant, $\|\v{P}^2 - \v{I}\| \ \|\v{X}\v{X}^T\| \to 0$ as $\|\v{P}^2 - \v{I}\| \to 0$.

\begin{proposition}
Let $\v{G} \in \mathbb{R}^{r \times c}$ and $\tilde{\v{G}}$ is the projection of $\v{G}$ onto $\mathcal{V}(r,c)$. 
Further let  $\v{G} = \tilde{\v{G}}\v{P}$. Then $\|\v{G} - \tilde{\v{G}}\| \to 0 \Leftrightarrow \|\v{P}^2 - \v{I}\| \to 0$.
\label{prop:1}
\end{proposition}
\begin{proof}
As the Frobenius norm is invariant to unitary transformations, and because $\tilde{\v{G}}$ and $\v{U}$ are unitary matrices, the norm between $\tilde{\v{G}}$ and $\v{G}$ is
\begin{align}
  \|\v{G} - \tilde{\v{G}}\| 
  &= \|\v{G}\tilde{\v{G}}^* - \v{I}\|  \nonumber \\
  &= \|\v{U}\v{P}\v{U}^* - 
   \v{I}\|  \nonumber \\
  &= \|\v{U}(\v{P} - 
   \v{I})\v{U}^*\|  \nonumber \\
  &= \|\v{P} - 
   \v{I}\|.  \nonumber 
\end{align}

$\v{P}$ is positive-semidefinite and hence $\|\v{P}^2 -\v{I}\| \to 0$ implies $\|\v{P} -\v{I}\| \to 0$ and the distance between $\v{G}$ and $\mathcal{V}(|\v{s}|,|\v{z}|)$  strictly decreases, i.e $\|\v{G} - \tilde{\v{G}}\| \to 0$. 
Generally:
\begin{equation}
   \|\v{P}^2 -\v{I}\| \to 0
   \Leftrightarrow \|\v{G} - \tilde{\v{G}}\| \to 0
   \label{eq:limit}
\end{equation}
This ends the proof. 
\end{proof}

Thus by Proposition \ref{prop:1} we have that: 

\begin{equation}
\|\v{G} - \tilde{\v{G}}\| \to 0 \Leftrightarrow \|\v{G}\v{X}\v{X}^T\v{G}^T - \v{D}\| \to 0. 
\end{equation}

This ends the proof. 

\end{proof}

\section{Proof of closeness}
\label{app:close}

\begin{theorem}

Let $\v{G}\in \mathbb{R}^{d_s,D}$ and let $\tilde{\v{G}}$ be its projection onto $\mathcal{V}(d_s,d_x)$.  
As the Frobenius norm $\|\v{G}\v{G}^{T}-\v{I}\| \to 0$, we also have $\|\tilde{\v{G}}-{\v{G}}\| \to 0$. 

\end{theorem}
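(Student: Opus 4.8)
The plan is to diagonalise everything through the singular value decomposition of $\v{G}$, turning both Frobenius norms into elementary functions of its singular values. Write the thin SVD $\v{G} = \v{U}\v{\Sigma}\v{V}^T$, where $\v{U}\in\mathbb{R}^{d_s\times d_s}$ is orthogonal, $\v{V}\in\mathbb{R}^{d_x\times d_s}$ has orthonormal columns, and $\v{\Sigma} = \mathrm{diag}(\sigma_1,\dots,\sigma_{d_s})$ with all $\sigma_i\ge 0$. As recalled in Appendix \ref{app:optimal}, the projection of $\v{G}$ onto $\mathcal{V}(d_s,d_x)$ is obtained by setting every singular value to one, i.e.\ $\tilde{\v{G}} = \v{U}\v{V}^T$; this is the Frobenius-nearest semi-orthogonal matrix, and is the unique such projection when $\v{G}$ has full row rank (the bound below holds regardless).

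First I would rewrite the hypothesis. Since $\v{G}\v{G}^T = \v{U}\v{\Sigma}^2\v{U}^T$ and the Frobenius norm is invariant under orthogonal conjugation,
\begin{equation}
\|\v{G}\v{G}^T - \v{I}\|^2 = \|\v{\Sigma}^2 - \v{I}\|^2 = \sum_{i=1}^{d_s}\left(\sigma_i^2 - 1\right)^2 .
\end{equation}
Next I would rewrite the conclusion. We have $\v{G} - \tilde{\v{G}} = \v{U}(\v{\Sigma} - \v{I})\v{V}^T$; left-multiplication by the orthogonal $\v{U}$ preserves the norm, and since $\v{V}^T\v{V} = \v{I}$,
\begin{equation}
\|\v{G} - \tilde{\v{G}}\|^2 = \|(\v{\Sigma}-\v{I})\v{V}^T\|^2 = \mathrm{tr}\!\left((\v{\Sigma}-\v{I})\v{V}^T\v{V}(\v{\Sigma}-\v{I})\right) = \sum_{i=1}^{d_s}\left(\sigma_i - 1\right)^2 .
\end{equation}

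The theorem then reduces to the scalar fact that $\sum_i(\sigma_i-1)^2 \to 0$ whenever $\sum_i(\sigma_i^2-1)^2 \to 0$. This is immediate: for $\sigma_i\ge 0$ we have $\sigma_i+1\ge 1$, hence $(\sigma_i-1)^2 = (\sigma_i^2-1)^2/(\sigma_i+1)^2 \le (\sigma_i^2-1)^2$, so summing gives the non-asymptotic bound $\|\v{G}-\tilde{\v{G}}\| \le \|\v{G}\v{G}^T - \v{I}\|$, which in particular yields the stated limit. The only subtlety — and the reason I would route the proof through the SVD rather than directly through the right polar decomposition $\v{G}=\tilde{\v{G}}\v{P}$ used in Appendix \ref{app:optimal} — is dimensional: for a genuinely rectangular ($d_s<d_x$) matrix that right polar factor $\v{P}$ is $d_x\times d_x$ and rank-deficient, so $\|\v{P}-\v{I}\|$ does not equal $\|\v{G}-\tilde{\v{G}}\|$ and one cannot simply invoke Proposition \ref{prop:1}; passing instead to the $d_s$ singular values (equivalently, to the left polar factor $(\v{G}\v{G}^T)^{1/2}$, which is $d_s\times d_s$ and satisfies $((\v{G}\v{G}^T)^{1/2})^2 = \v{G}\v{G}^T$) keeps every matrix the right shape and makes the two displays above exact.
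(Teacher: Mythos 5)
Your proof is correct, and it takes a genuinely different — and in fact more careful — route than the paper's. The paper proves this theorem by writing the polar decomposition $\v{G}=\v{U}\v{P}$, invoking its Proposition 1 ($\|\v{P}^2-\v{I}\|\to 0 \Leftrightarrow \|\v{G}-\tilde{\v{G}}\|\to 0$), and then claiming $\|\v{G}\v{G}^*-\v{I}\|=\|\v{U}\v{P}^2\v{U}^*-\v{I}\|=\|\v{P}^2-\v{I}\|$ by unitary invariance; you instead diagonalise through the thin SVD and reduce both Frobenius norms to explicit functions $\sum_i(\sigma_i^2-1)^2$ and $\sum_i(\sigma_i-1)^2$ of the $d_s$ singular values. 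The two arguments share the same scalar kernel — the paper's ``$\v{P}$ is positive-semidefinite, hence $\|\v{P}^2-\v{I}\|\to 0$ implies $\|\v{P}-\v{I}\|\to 0$'' is exactly your inequality $(\sigma-1)^2\le(\sigma^2-1)^2$ for $\sigma\ge 0$ — but your packaging buys two real things. First, dimensional soundness: as you note, for $d_s<d_x$ the right polar factor $\v{P}$ and the semi-orthogonal $\v{U}$ do not support the invariance identities the paper uses (conjugation or right-multiplication by a rectangular partial isometry is not norm-preserving, and $\|\v{P}-\v{I}\|^2$ picks up a spurious $(d_x-d_s)$ contribution from the zero eigenvalues of $\v{P}$), so the paper's chain of equalities, read literally, does not hold; restricting to the $d_s$ singular values, or equivalently to the left polar factor $(\v{G}\v{G}^T)^{1/2}\in\mathbb{R}^{d_s\times d_s}$, repairs this. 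Second, you obtain the clean non-asymptotic bound $\|\v{G}-\tilde{\v{G}}\|\le\|\v{G}\v{G}^T-\v{I}\|$ rather than only the limit statement. What the paper's route buys in exchange is brevity and reuse: it leans on Proposition 1, which it has already established for Theorem 1, and keeps the whole appendix phrased in the single language of polar decompositions. Your version is self-contained and is the one I would keep.
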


\begin{proof}

Let $\v{G}\in \mathbb{R}^{d_s,D}$ and let $\tilde{\v{G}}$ be its projection onto $\mathcal{V}(d_s,d_x)$. By Proposition \ref{prop:1} in Appendix \ref{app:optimal} we have that \[\|\v{P}^2 -\v{I}\| \to 0 \Leftrightarrow \|\v{G} - \tilde{\v{G}}\| \to 0\]

$\v{G} \in \mathbb{R}^{d_s \times d_x}$, has polar decomposition  $\v{U}\v{P}$.
Note that $\v{G}\v{G}^* = \v{U}\v{P}\v{P}^T\v{U}^* = \v{U}\v{P}^2\v{U}^*$ because $\v{P}$ is Hermitian. 
Recall that the Frobenius norm is invariant to unitary transformations, such as $\v{U}$: 
\begin{align}
   \|\v{G}\v{G}^* - \v{I}\| 
   &= \|\v{U}\v{P}^2\v{U}^* -\v{I}\|  \nonumber \\
   &= \|\v{P}^2 -\v{I}\|.  \nonumber 
\end{align}
As such: 
\begin{equation}
   \|\v{G}\v{G}^* - \v{I}\| \to 0
   \Leftrightarrow \|\v{G} - \tilde{\v{G}}\| \to 0
   \label{eq:limit_2}
\end{equation}

Note that we can trivially show this for $\v{G}^*\v{G} = \v{P}^2$.

This ends the proof. 
\end{proof}

\section{Linear ICA using JL Projections}
\label{app:linear_ica}

Instead of having $\v{W}=\v{\Sigma}^{-1}\v{U}^T$ be the result of SVD on the data matrix, we define our whitening matrix as $\v{W}=\v{\Lambda}\v{Q}$, a learnt diagonal matrix $\v{\Lambda}\in\mathbb{R}^{d_s}$ and a fixed, data-independent rectangular matrix $\v{Q}\in\mathbb{R}^{d_s\times d_x}$.
We must ensure that $\v{Q}$, our rectangular matrix, is approximately orthogonal, lying close to the manifold $\mathcal{V}(d_s,d_x)$.

As with the ICA model for Bijecta we construct approximately orthogonal matrices for $\v{Q}$ by way of Johnson-Lindenstrauss (JL) Projections \citep{Johnson1984, Dasgupta2003}.
A JL projection $\v{Q}$ for $\mathbb{R}^{d_x} \rightarrow \mathbb{R}^{d_s}$ is sampled at initialisation from a simple binary distribution with equal probability \citep{Achlioptas2003} such that: 
\begin{equation}
    Q_{i,j} = \pm 1/\sqrt{d_s} .
    \label{eq:jl_components}
\end{equation}
This distribution satisfies $\mathbb{E}[\v{Q}\v{Q}^{T}] = \v{I}$, and, as shown \citep{Achlioptas2003}, such a draw has $\v{Q}\v{Q}^T \approx \v{I}$. 
This gives us our factorised form for $\v{A}^+$,
\begin{equation}
\v{A}^{+}=\v{\Phi}\v{R\Lambda}\v{Q}.
\label{eq:decorr_new}
\end{equation}
\footnote{We use capitalised bold Greek letters for diagonal matrices, capitalised bold Roman letters otherwise}.

We can choose to fix $\v{Q}$ after initialisation such that optimisation occurs in the significantly smaller $\mathbb{R}^{\frac{1}{2}d_s\times (d_s + 3)}$ space, solely for the matrix $\v{\Phi R \Lambda}$, greatly simplifying our optimisation problem.
We draw $\v{Q}$ at initialisation and leave it fixed for the remainder of optimisation, making it a much more efficient method than SVD-based whitening.

By Theorems 1 and 2 we know that $\v{A}^{+}$ using $\v{Q}$ as a component will be close to the manifold of decorrelating matrices, Eq~\eqref{eq:decorr_new}. 
In the next section, we detail constraints on the only matrix that is optimised, $\v{\Phi R}$, which improve convergence.

\subsection{The $SO(d_s)$ Lie group for $\v{R}$}

Recall that $\v{R}$ is a square and orthogonal decorrelating matrix of dimensionality $\mathbb{R}^{d_s \times d_s}$.
As such, we can constrain $\v{R}$ to be in the orthogonal group $O(d_s)$, which has been shown to be optimal for decorrelating ICA sources. 
In the case of square ICA, with sufficient data, the maximum likelihood unmixing matrices are members of this group and will be reached by models confined to this group \citep{Everson1999}.

We want to perform unconstrained optimisation in learning our matrix, so we wish to use a differentiable transformation from a class of simpler matrices to $O(d_s)$.
One such transform is the Cayley transform \citep{Cayley1846}, which maps a given anti-symmetric matrix $\v{M}$ (i.e., satisfying $\v{M} = -\v{M}^T$) to the group of special orthogonal matrices $SO(d_s)$ with determinant 1. 
$SO(d_s)$ is the elements of the group $O(d_s)$ with determinant 1.
Unlike $O(d_s)$ it is path-connected, aiding optimisation.
As such, we propose defining our square unmixing matrix using the Cayley transform of the anti-symmetric matrix $\v{M}$,
\begin{equation}
\v{R} = (\v{I} - \v{M})^{-1}(\v{I} + \v{M}).
\end{equation}
This can be formulated as an unconstrained problem, easing optimisation, by defining $\v{M} =(\v{L}-\v{L}^T)/2$ and then optimising over the square real-valued matrix $\v{L}$.
This further reduces the optimisation space for $\v{A}^+$ to $\mathbb{R}^{\frac{1}{2}d_s(d_s+3)}$.

\section{Sub- or Super-Gaussian Sources?}

In classical non-linear ICA a simple non-linear function (such as a matrix mapping followed by an activation function) is used to map directly from data to the setting of the sources for that datapoint \citep{Bell1995}.
In this noiseless model, the activation function is related to the prior one is implicitly placing over the sources \citep{Roweis1999}.
The choice of non-linearity here is thus a claim on whether the sources were sub- or super- Gaussian.
If the choice is wrong, ICA models struggle to unmix the data \citep{Roweis1999}.
Previous linear ICA methods enabled both mixed learning of super and sub sources \citep{Lee1997}, and learning the nature of the source \citep{Everson1999}.

\section{Identifiability of Linear ICA}
Recall that for noiseless linear ICA the learnt sources will vary between different trained models only in their ordering and scaling \cite{Choudrey2000, Hyvarinen2001, Everson2001}.
Under this model our data matrix $\v{X}$ is
\begin{equation}
    \v{X} = \v{A}\v{S},
\end{equation}
where each column of $\v{S}$ is distributed according to $p(\v{s})$.
Given a permutation matrix $\v{P}$ and a diagonal matrix $\v{D}$, both $\in\mathbb{R}^{d_s\times d_s}$, we define new source and mixing matrices such that $\v{X}$ is unchanged:
$\v{A} \leftarrow \v{A}\v{P}\v{D}$ and $\v{S} \leftarrow \v{P}\v{D}^{-1}\v{S}$ \cite{Choudrey2000}.
With fixed source variance the scaling ambiguity is removed, so linear ICA is easy to render identifiable up to a permutation and sign-flip of sources.
However, this is only the case when the underlying sources are non-Gaussian \citep{Hyvarinen2001}.
The Bijecta model can be interpreted as learning the ``data'' for linear ICA, with
\begin{align}
f(\v{X}) \approx \v{A}\v{S}.
\end{align}
In this setting, a natural question to ask is whether or not, given a particular function $f : \mathcal{X} \rightarrow \mathcal{Z}$,
the linear ICA is identifiable.
The Bijecta model explicitly aims to induce this linear identifiability on its feature map, as we impose a Laplace prior $p(\v{s})$, with fatter-than-Gaussian tails.

\section{Coupling Layers in Flows}
\label{app:coupling-layers}

Coupling layers in flows are designed to produce lower triangular Jacobians for ease of calculation of determinants.
RQS-flows are defined by $K$ monotonically increasing \textit{knots}, which are
the coordinate pairs through which the function passes: $\{(x_k,y_k)\}^K_{k=0}$. 
We can interpolate values between each of the $K$ knots using the equation for the RQS transformation~\cite{Durkan2019}.
The resulting function is a highly flexible non-linear transformation, such that RQS flows require fewer composed mappings to achieve good performance relative to other coupling layers. %
The knots themselves are trainable, and parameterised by deep neural networks.
These can then be composed with other tractable transformations, including permutations and multiplication by triangular matrices.
More specifically these layers can be defined as \citep{Durkan2019}: 
\begin{enumerate}
    \item Given an input $\v{x}$, split $\v{x}$ into two parts $\v{x}_{1: d-1}$ and $\v{x}_{d: D}$;
    \item Using a neural network, compute parameters for a bijective function $f$ using one half of $\v{x}$: $\theta_{d:D} = \mathrm{NN}(\v{x}_{1:d-1})$;
    parameters $\theta_{1:d-1}$ are learnable parameters that do \textit{not} depend on the input;
    \item The output $\v{y}$ of the layer is then $y_i=f_{\theta_i}(x_i)$ for $i=1,...,d_x$. 
\end{enumerate}
These coupling transforms thus act elementwise on their inputs.

\section{Network Architectures and Hyperparameters}
\label{app:arch}
\subsection{RQS flows and Bijecta}
Within all Rational Quadratic Spline (RQS) flows we parameterise 4 knots for each spline transform.
The hyper-parameters of the knots were as in the reference implementation from \cite{Durkan2019}, available at \href{https://github.com/bayesiains/nsf}{github.com/bayesiains/nsf}:
we set the minimum parameterised gradient of each knot to be 0.001, the minimum bin width between each encoded knot and the origin to be 0.001, and the minimum height between each knot and the origin to be 0.001.

Unlike in \cite{Durkan2019}, where they view a single RQS `layer' as composed of a sequence of numerous coupling layers, in this paper the number of layers we describe a model as having is exactly the number of coupling layers present.
So for our 4-layer models there are four rational quadratic splines.
Each layer in our flows are composed of: an actnorm layer, an invertible 1x1 convolution, an RQS coupling transform and a final 1x1 invertible convolution.

The parameters of the knots were themselves parameterised using ResNets nets, as used in RealNVP \cite{Dinh2017}, for each of which we used 3 residual blocks and batch normalisation.
As in \cite{Dinh2017} we factor-out after each layer.
All training was done using ADAM \cite{Kingma2015}, with default $\beta_1, \beta_2$, a learning rate of 0.0005 and a batch size of 512.
We perform cosine decay on the learning rate during training, training for 25,000 steps.

Data was rescaled to 5-bit integers and we used RealNVP affine pre-processing so our input data was in the range $[\epsilon,1-\epsilon]$ with $\epsilon=0.05$.

\subsection{VAEs}

\paragraph{Affine}
For the affine experiments we used fully convolutional encoders and decoders, each made out of 5 resnet block.
Each block also down/up-scales their input by a factor of 2 along the spatial input dimensions.
As our input images are $32\times 32$, this means that 5 such scalings map to $1\times 1$ representations.

As we pass through the encoder we double the number of features for each block, from an initial number of 16.
Thus the final residual output has $32\times 16=512$ filters.
A $1\times1$ convolutional layer then maps this to the posterior.
The decoder performs these same operations in reverse order: an initial $1\times 1$ convolutional layer maps the sampled value of the latent variable into a $1\times 1\times 512$ hidden representation, a chain of 5 upscaling resnet blocks map this to a $32\times 32\times 16$ representation.
Finally a $1\times 1$ convolution maps this to the $32\times 32\times 3$ sub-pixel means of the likelihood function.

\paragraph{CelebA}
For CelebA we used the same networks as used by \cite{Chen2018} for their CelebA experiments.
The encoder and decoder are composed for 5 convolutions/transposed convolutions, with batchnorm layers in between.
The number of filters increases as we go up the encoder: $64 \rightarrow 64 \rightarrow 128 \rightarrow 128 \rightarrow 512$ before being mapped to the posterior distribution's parameters by a $1 \times 1$ convolution.
The decoder has the same sequence of filter sizes in its transposed convolutions but in reverse, with a final convolution to the $32\times 32\times 3$ sub-pixel means of the likelihood function.

\newpage
\onecolumn
\section{Reconstructions, Latent Traversals, and Samples}
\subsection{Numerical Results}
\label{app:num_results}

\begin{table*}[h]
  \caption{Here we evaluate the source-separation and reconstruction quality of non-square ICA models. 
  We evaluate source separation by evaluating the mean log probability of the validation set embeddings in $\mathcal{S}$ under our heavy-tailed prior, normalised by the dimensionality of $\mathcal{S}$ space: $\log p(\v{s})/d_s$ ($d_s=64$).
  As our base distribution is heavy-tailed, this metric evaluates the axis-alignment, the \textit{independence}, of learnt factors. 
  We also give the validation set bits-per-dim (bpd), a scaled ELBO and thus a measure of model quality (lower better).
  Bijecta models consistently have lower bpd than the linear models.
  We evaluate the quality of low-dimensional representations by measuring the $L_1$ reconstruction error in $\mathcal{X}$.
  A better representation encodes more information in $\mathcal{S}$, making it easier for the model to then reconstruct in $\mathcal{Z}$ and subsequently in $\mathcal{X}$.
  Most striking is the improvement across all metrics when introducing a \textit{single} bijective mapping. 
  Our 4-layer model further improves the quality of the compressed representations as seen by the lower reconstruction errors.}
  
  \label{non-square-res}
  \centering
  \resizebox{\textwidth}{!}{%
  \begin{tabular}{r|rcccc}
    \toprule
                            && CIFAR-10 & MNIST & fashion-MNIST & CelebA\\
        \midrule
    \multirow{3}{*}{Linear-ICA}    
                                & $\log p(\v{s})/d_s$ & -4.8   & -4.29 & -3.0 & -9.0\\
                                & bits-per-dim   & 6.3   & 8.6   & 6.6 & 6.2\\
                                & $L_1$ reconstruction error in $\mathcal{X}$   & 3.0   & 3.1   & 2.9 & 2.9\\
                                \midrule
    \multirow{3}{*}{1-layer Bijecta}    
                                &  $\log p(\v{s})/d_s$ &  \textbf{-4.2} & \textbf{-2.8} & \textbf{-2.2} & \textbf{-7.3} \\
                                & bits-per-dim  & 3.8  & 3.2 & 3.8 & 3.4\\
                                & $L_1$ reconstruction error in $\mathcal{X}$   & 2.0   & 1.0   & 1.6 & 1.9\\
            \midrule
     \multirow{3}{*}{4-layer Bijecta}    
                                &  $\log p(\v{s})/d_s$ &  -4.7 & -3.0 & \textbf{-2.2} & -7.8\\
                                & bits-per-dim  & \textbf{3.2}  & \textbf{2.1} & \textbf{3.1} & \textbf{3.2}\\
                                & $L_1$ reconstruction error in $\mathcal{X}$   & \textbf{1.9}   & \textbf{0.6}   & \textbf{1.2} & \textbf{1.4}\\
    \bottomrule
  \end{tabular}}
\end{table*}

\newpage
\subsection{Reconstructions}
\begin{figure}[h!]
\centering
\makebox[\textwidth]{
\subfloat[F-MNIST Input Data\label{subfig:fmnist_inputs}]{%
       \includegraphics[width = 4cm]{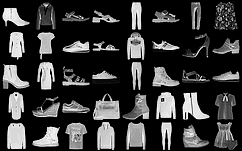}
     }
     \hspace{0mm}
\subfloat[F-MNIST Linear ICA Recon \label{subfig:fmnist_ica}]{%
       \includegraphics[width = 4cm]{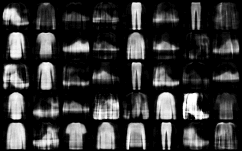}
     }
\subfloat[F-MNIST 1-Bijecta Recon \label{subfig:fmnist_bijecta1}]{%
       \includegraphics[width = 4cm]{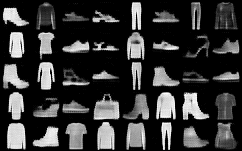}
     }
\subfloat[F-MNIST 4-Bijecta Recon \label{subfig:fmnist_bijecta3}]{%
       \includegraphics[width = 4cm]{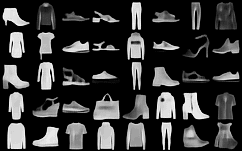}
     }
    }

\makebox[\textwidth]{
\subfloat[CIFAR-10 Input Data\label{subfig:cifar10_inputs}]{%
       \includegraphics[width = 4cm]{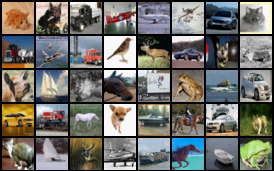}
     }
     \hspace{0mm}
\subfloat[CIFAR-10 Linear ICA Recon \label{subfig:cifar10_ica}]{%
       \includegraphics[width = 4cm]{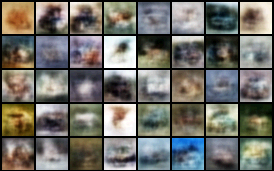}
     }
\subfloat[CIFAR-10 1-Bijecta Recon \label{subfig:cifar10_bijecta1}]{%
       \includegraphics[width = 4cm]{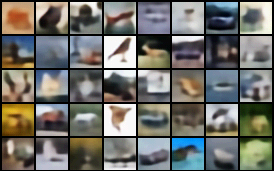}
     }
\subfloat[CIFAR-10 4-Bijecta Recon \label{subfig:cifar10_bijecta3}]{%
       \includegraphics[width = 4cm]{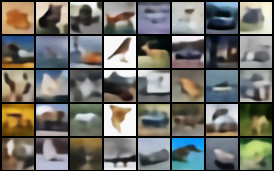}
     }
    }
\makebox[\textwidth]{
\subfloat[CelebA Input Data\label{subfig:celeba_inputs}]{%
       \includegraphics[width = 4cm]{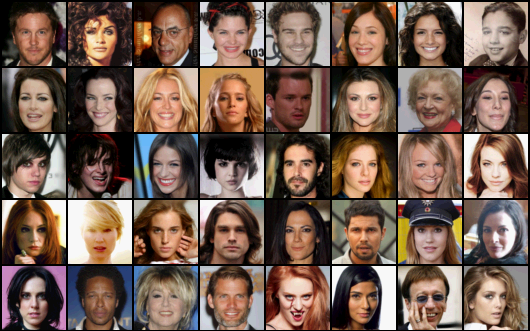}
     }
     \hspace{0mm}
\subfloat[CelebA Linear ICA Recon \label{subfig:celeba_ica}]{%
       \includegraphics[width = 4cm]{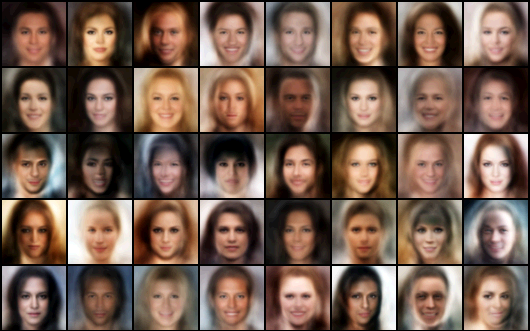}
     }
\subfloat[CelebA 1-Bijecta Recon \label{subfig:celeba_bijecta1}]{%
       \includegraphics[width = 4cm]{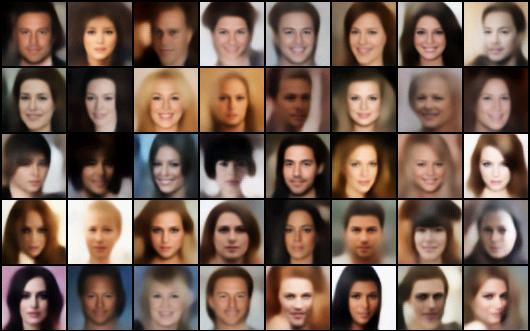}
     }
\subfloat[CelebA 4-Bijecta Recon \label{subfig:celeba_bijecta3}]{%
       \includegraphics[width = 4cm]{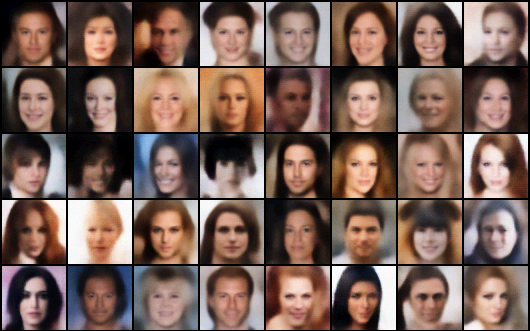}
     }
    }
     
\caption{Original data and reconstructions for Fashion MNIST (a)-(d), CIFAR-10 (e)-(h) and CelebA (i)-(l). The first column -- (a), (e), (i) -- shows a random selection of 40 images from each dataset's training set. The second column -- (b), (f), (j) -- shows the reconstructions obtained by linear non-square ICA using our approximately-Stiefel unmixing matrix as is Appendix \ref{app:linear_ica}, with $d_s=64$. The third column -- (c), (g), (k) -- shows the reconstructions for Bijecta with an RQS flow with a single layer and $d_s=64$. The fourth column -- (d), (h), (l) -- for a 4 -layer Bijecta, $d_s=64$. Bijecta models show much higher fidelity reconstructions than the linear ICA model. On CelebA the 4-layer model gives the highest fidelity reconstructions. }
\label{fig:app_recons}
\end{figure}

\begin{figure}[h!]{
    \centering
\subfloat[CelebA Input Data]{
       \includegraphics[width=4cm]{figures/appendix/cropped/celeba_originals.png_cropped.png}
     }
\subfloat[CelebA 12-Bijecta Recon]{%
       \includegraphics[width=4cm]{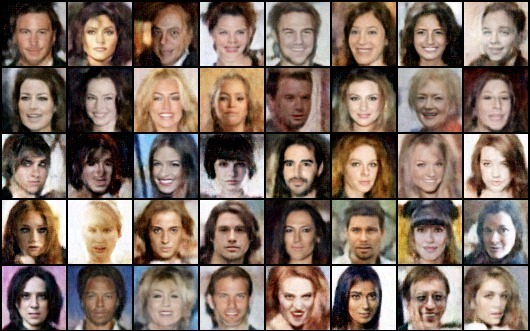}
     }

\caption{Here we show reconstructions for a 12-layer Bijecta model, trained on a batch size of 64 for the CelebA dataset, with a latent space dimensionality $d_s=512$. The quality of the reconstructions clearly illustrate that as we stack invertible layers our model and increase the size of $\mathcal{S}$ we are able to reconstruct images with a high degree of accuracy.}}
\label{fig:large_recons_12layer}
\end{figure}

\newpage
\subsection{Latent Traversals}
\label{app:traversals}

\begin{figure}[h!]
\centering
\makebox[\textwidth]{

\subfloat[Dim 3 - Gender]{%
       \includegraphics[width=6cm]{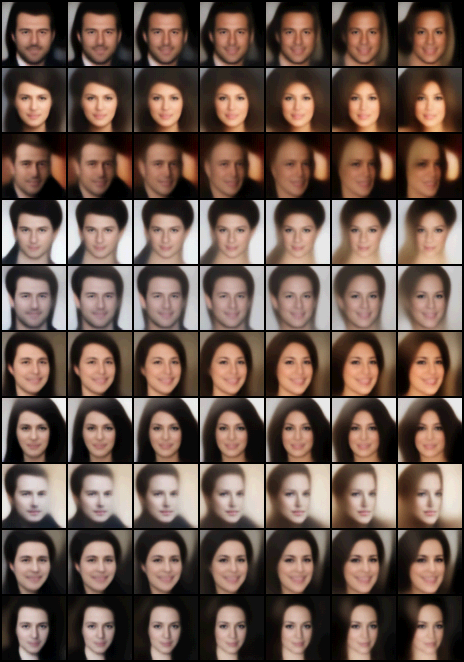}}

\subfloat[Dim 12 - Face Rotation]{%
       \includegraphics[width=6cm]{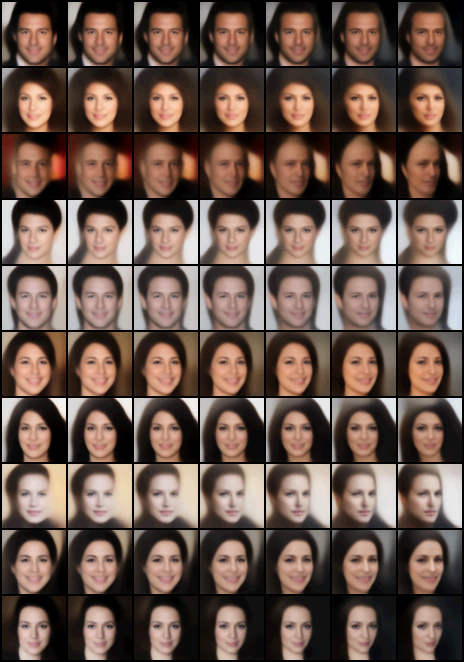}}
     }\\
\makebox[\textwidth]{

\subfloat[Dim 15 - Light Warmth]{%
       \includegraphics[width=6cm]{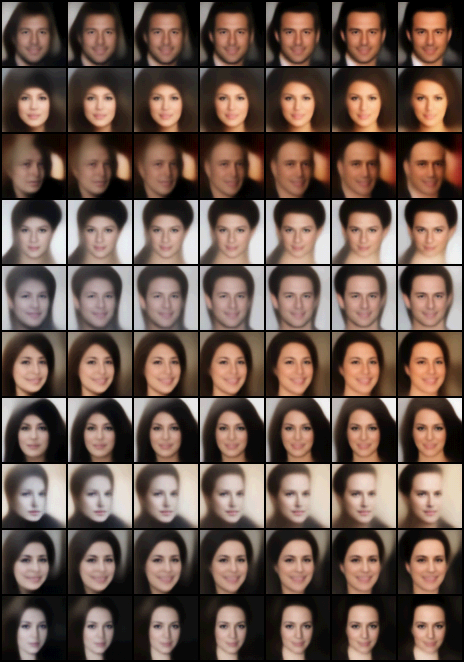}}

\subfloat[Dim 24 - Eye Shadow]{%
      \includegraphics[width=6cm]{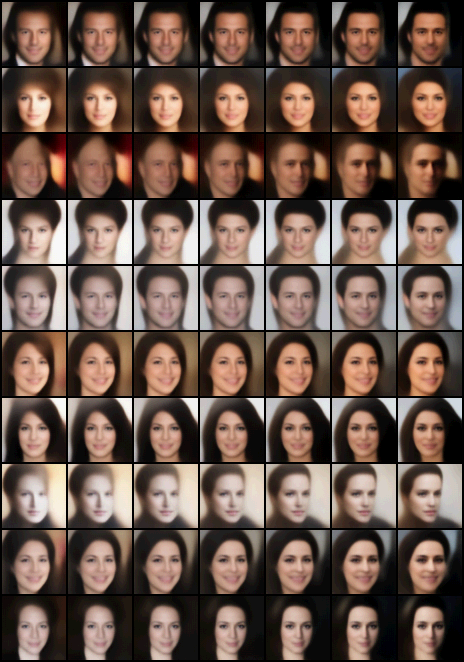}}     
       
\subfloat[Dim 27 - Face Width]{%
       \includegraphics[width=6cm]{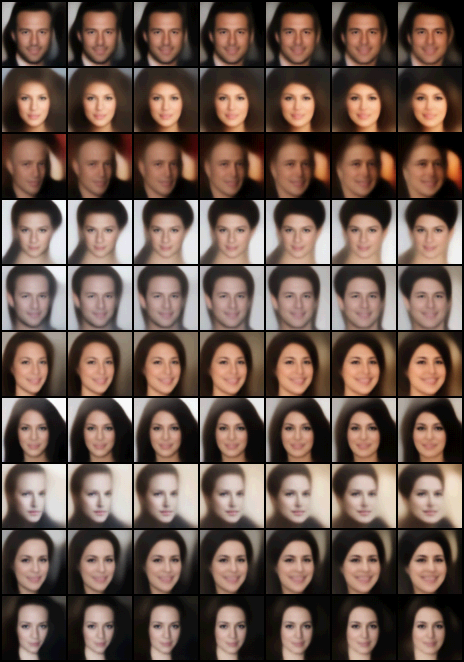}}
     }

\caption{Here we show latent traversals along 5 of 32 of the latent dimensions for 10 posterior samples from an 8-layer Bijecta model. 
We list the dimension's encoded factor of variation under each image. }
\label{fig:app_latent_traversals}
\end{figure}

\newpage
\subsection{Samples}

\subsubsection{Marginal Posterior Samples CelebA}
\label{app:kde_samples}

\begin{figure*}[h]
    \centering
     \includegraphics[width=0.35\textwidth]{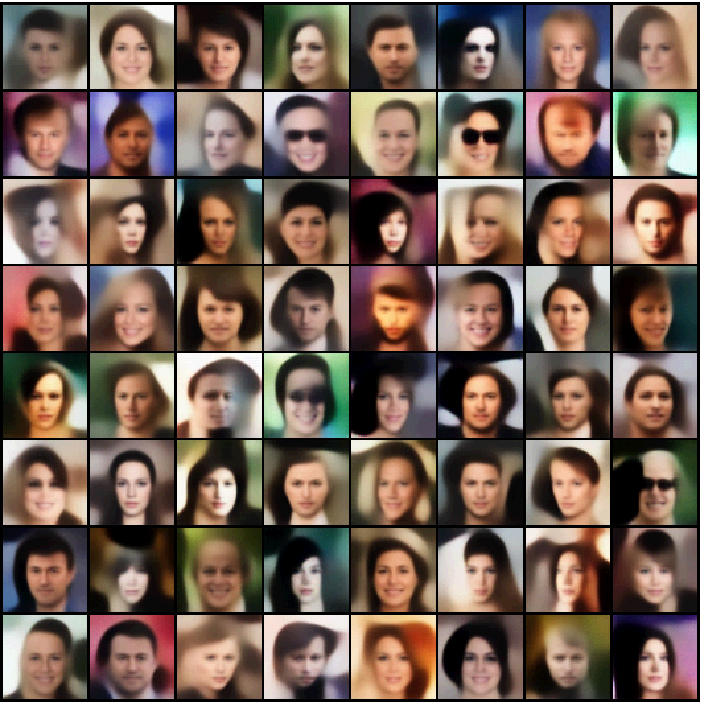}
      \hspace{0.5cm}
    \caption{Here we show decodings from an 8-layer Bijecta ($d_s=32$) trained on CelebA where we sample from a factorised approximation to Bijecta's posterior.}
    \label{fig:post-samples-celeba}
\end{figure*}

\subsubsection{Prior Samples CelebA}
\label{app:prior_samples}
\begin{figure}[h!]
\centering
       \includegraphics[width=18cm]{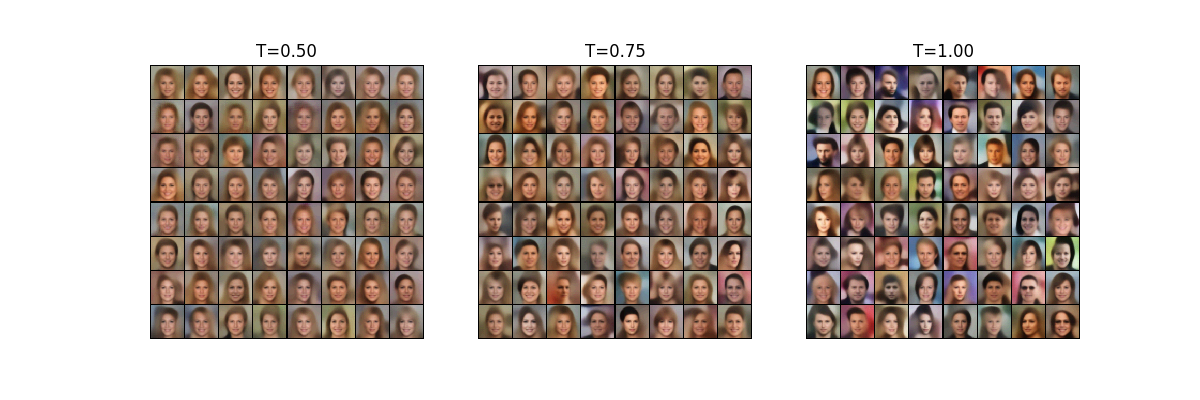}
       \\
       \includegraphics[width=18cm]{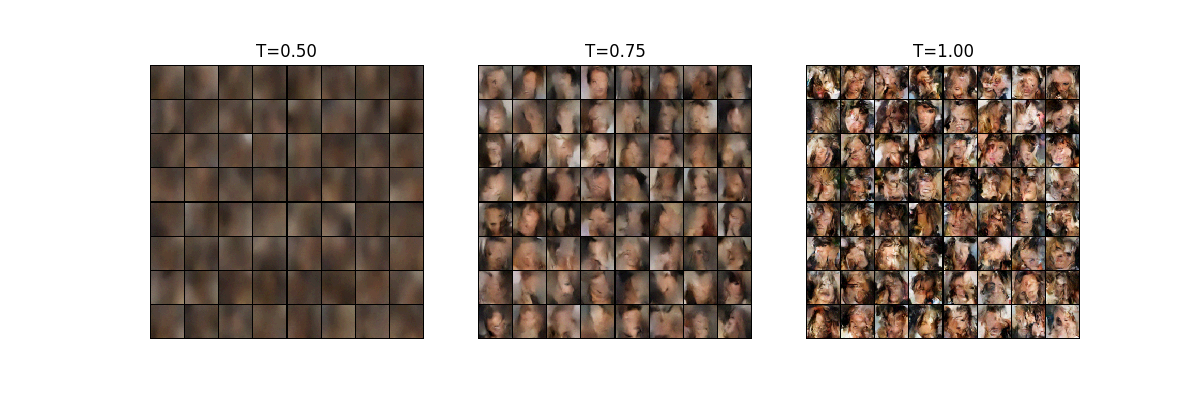}
\caption{Here we show samples from an 8-layer Bijecta model (top) and an 8-layer RQS-flow (bottom) with Laplace prior, trained on a batch size of 256 for the CelebA and CIFAR-10 datasets for 100000 steps, with a latent space dimensionality $d_s=32$. We show samples rescaled at scale T. }
\label{fig:samples_bijecta}
\end{figure}

\end{document}